\documentclass{article}
\pdfoutput=1  

\usepackage{microtype}
\usepackage{graphicx}
\usepackage{subcaption}
\usepackage{booktabs} 
\usepackage{subcaption}
\usepackage{hyperref}
\usepackage{float}

\newcommand{\theHalgorithm}{\arabic{algorithm}}

\usepackage[accepted]{icml2026}
\usepackage{tabularx}   
\usepackage{booktabs}   

\usepackage{amsmath}
\usepackage{amssymb}
\usepackage{mathtools}
\usepackage{amsthm}

\usepackage{graphicx}
\usepackage{booktabs}       
\usepackage{amsfonts}       
\usepackage{nicefrac}       
\usepackage{microtype}      
\usepackage{wrapfig}
\usepackage{enumitem}
\usepackage{multirow} 
\usepackage{amssymb} 

\usepackage{array}
\newcolumntype{M}{>{\centering\arraybackslash}p{0.055\textwidth}}

\makeatletter
\let\algorithm\@undefined
\let\endalgorithm\@undefined
\expandafter\let\csname algorithm*\endcsname\@undefined
\expandafter\let\csname endalgorithm*\endcsname\@undefined
\let\algorithmic\@undefined
\let\endalgorithmic\@undefined
\let\ALC@unique\@undefined
\let\listofalgorithms\@undefined
\let\listalgorithmname\@undefined
\let\theHalgorithm\@undefined
\makeatother
\usepackage[ruled,vlined,linesnumbered]{algorithm2e}
\SetKwInput{KwInput}{Input}
\SetKwInput{KwOutput}{Output}
\SetKw{KwRet}{return}
\usepackage[capitalize,noabbrev]{cleveref}

\theoremstyle{plain}
\newtheorem{theorem}{Theorem}[section]

\newtheorem{lemma}[theorem]{Lemma}
\newtheorem{corollary}[theorem]{Corollary}
\theoremstyle{definition}

\theoremstyle{remark}

\usepackage[disable,textsize=tiny]{todonotes}

\usepackage{placeins}
\usepackage{makecell}
\usepackage{xurl} 

\icmltitlerunning{Residual-Guided Multi-Resolution Refinement of Foundation Models}

\begin{document}

\twocolumn[
  \icmltitle{Residual-Guided Multi-Resolution Refinement of Foundation Models: A Case Study in Drought Forecasting}



  \icmlsetsymbol{equal}{*}

  \begin{icmlauthorlist}
    \icmlauthor{Wentao Gao}{yyy}
    \icmlauthor{Jiuyong Li}{yyy}
    \icmlauthor{Lin Liu}{yyy}
    \icmlauthor{Thuc Duy Le}{yyy}
    \icmlauthor{Jixue Liu}{yyy}
    \icmlauthor{Yanchang Zhao}{sch}
    \icmlauthor{Yun Chen}{comp}
  \end{icmlauthorlist}

  \icmlaffiliation{yyy}{Adelaide University, Adelaide, SA, Australia}
  \icmlaffiliation{comp}{CSIRO Environment, Canberra, Australia}
  \icmlaffiliation{sch}{CSIRO Technology, Canberra, Australia}

  \icmlcorrespondingauthor{Wentao Gao}{wentao.gao@adelaide.edu.au}

  \icmlkeywords{Machine Learning, ICML}

  \vskip 0.3in
]



\printAffiliationsAndNotice{}  

\begin{abstract}
Regional climate prediction presents unique challenges for time series foundation models, which typically process temporal patterns through single-pass inference. Expert climatologists, in contrast, employ multi-scale temporal analysis and iterative refinement based on systematic error diagnosis. We present RGMR (Residual-Guided Multi-Resolution Refinement), an inference-time framework that adapts pre-trained foundation models to perform structured coarse-to-fine refinement for climate forecasting without updating backbone parameters. Applied to drought forecasting using the Standardized Precipitation Evapotranspiration Index (SPEI), RGMR is architecture-agnostic across the three TSFM backbones evaluated per site (TimesFM, TimeGPT, TabPFN) and consistently lowers test-set MSE on three South Australian sites and three additional regions outside South Australia. Applied to TimesFM, the wrapper reduces one-month-ahead SPEI MSE by up to 18.9\% across the three South Australian sites (mean reduction $\approx$18.7\%). Overall, RGMR provides a practical route for deploying frozen TSFMs in regional climate forecasting workflows.
\end{abstract}

\section{Introduction}

Climate prediction typically involves multi-stage analysis where meteorologists examine broad atmospheric patterns before progressively focusing on regional details, continuously adjusting forecasts as new evidence emerges~\citep{lynch2006emergence,bauer2015quiet}. This iterative approach, where predictions evolve through structured analysis at multiple temporal scales, differs substantially from current AI systems that generate forecasts through single forward passes~\citep{schultz2021can}.

Recent advances in time series foundation models (TSFMs) have demonstrated strong performance across diverse forecasting domains~\citep{das2024decoderonlyfoundationmodeltimeseries,lagllama2024,chronos2024}. However, TSFMs face challenges in regional climate prediction tasks such as drought forecasting, applications requiring nuanced understanding of local climate dynamics and multi-scale temporal patterns~\citep{vicente2020long,mukherjee2018climate,gao2022earthformer,chen2025multiyear}.

We hypothesize that this limitation stems from architectural differences in how these models process temporal information. Foundation models typically map historical sequences to future predictions in a single forward pass, processing all temporal information uniformly~\citep{lim2021temporal}. Climatological analysis, by contrast, often involves iterative examination of data at multiple time scales~\citep{trenberth2014earth}, identification of systematic patterns and biases~\citep{jolliffe2003forecast}, and progressive refinement through focused analysis of specific temporal components~\citep{palmer2000nonlinear}.

Recent work in iterative refinement for sequential prediction tasks~\citep{wei2022chain,kojima2022large,yao2023tree} suggests that multi-stage processing can improve performance on complex reasoning tasks. This motivates our central question: \emph{Can time series foundation models benefit from hierarchical refinement for climate forecasting?}

Figure~\ref{fig:cognitive_mismatch} illustrates the difference between multi-scale temporal analysis and single-pass processing using SPEI time series data. Climatological analysis typically decomposes multi-scale climate patterns into constituent temporal components (annual cycles, ENSO oscillations, and extreme events), enabling systematic examination of each scale before integration. Current AI models process the entire complex signal simultaneously, which may lead to systematic prediction errors during drought events and regime shifts.

\begin{figure}[h]
    \centering
    \includegraphics[width=\linewidth]{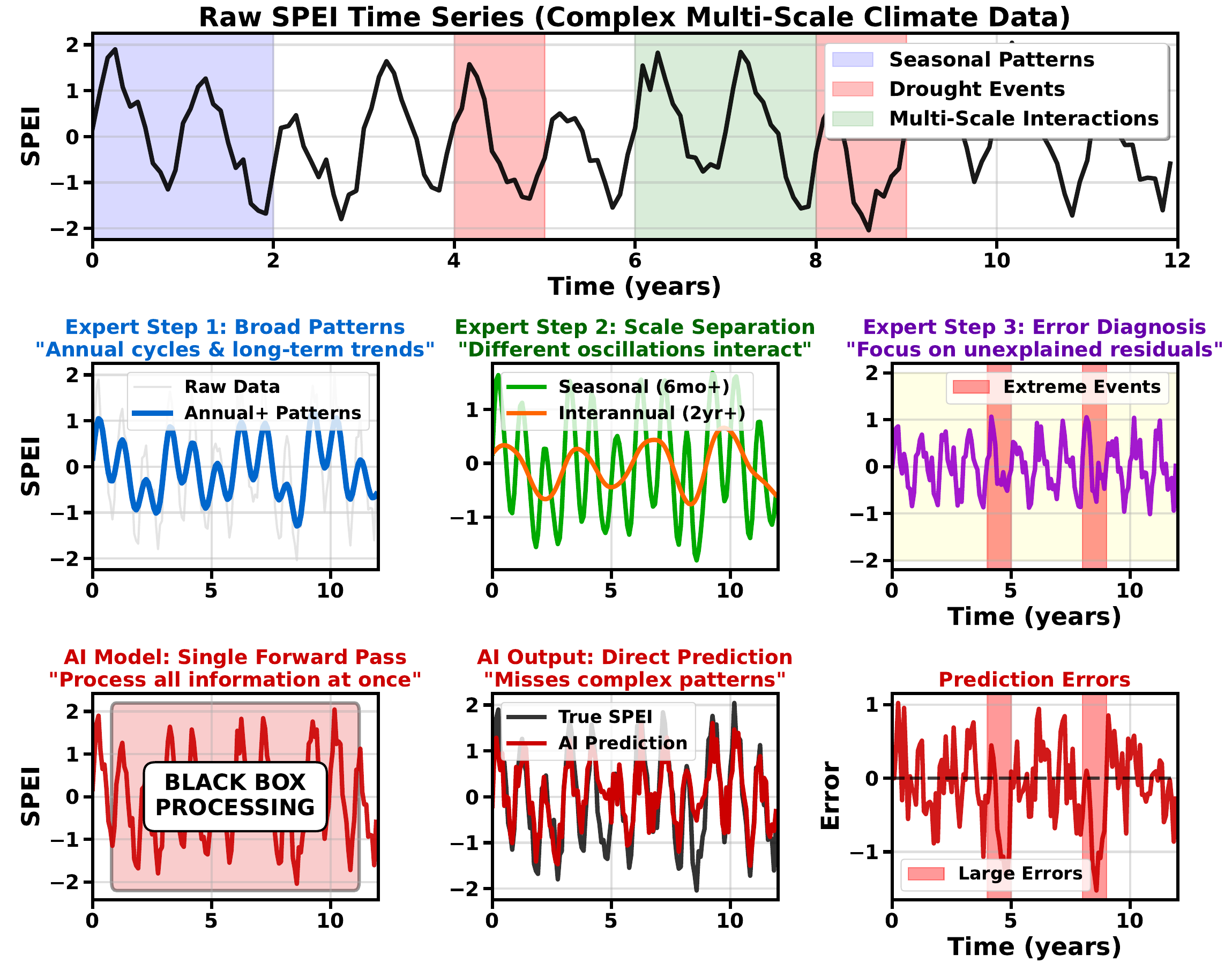}
    \caption{Comparison of temporal processing approaches in climate forecasting. Multi-scale decomposition analysis (middle row) versus single forward pass processing (bottom row) for complex climate pattern prediction.}
    \label{fig:cognitive_mismatch}
\end{figure}

Together, these observations motivate an error-corrective inference loop for frozen TSFMs.
Iterative refinement in language tasks suggests that initial hypotheses can be evaluated and
sharpened through feedback~\citep{shinn2023reflexion,madaan2023self}, while operational climate
systems require efficient inference that avoids gradient updates to the foundation
backbone~\citep{bauer2015quiet}. To address this setting, we propose \textbf{Residual-Guided
Multi-Resolution Refinement (RGMR)}, a framework that enables time series foundation models to
perform multi-scale corrective inference without modifying backbone parameters. Our approach uses
domain knowledge to define relevant temporal scales, then iteratively refines predictions from
coarse to fine resolutions through residual-guided error correction.

We evaluate RGMR on drought forecasting using SPEI~\citep{vicente2010standardized} data across multiple South Australian locations. We find consistent improvements over direct foundation model application, achieving up to  18.9\%  error reduction in mean squared error.

\textbf{Our main contributions include:}
\begin{itemize}
\item We propose \textbf{RGMR}, an inference-time wrapper that adapts frozen TSFMs to regional climate forecasting through residual-guided coarse-to-fine refinement without updating backbone parameters.
\item We provide a contraction-style analysis and a finest-resolution no-harm condition, together with projection and resolution-scale sensitivity analyses that justify the coarse-to-fine design.
\item We show that RGMR consistently improves the evaluated frozen TSFM backbones, including up to 18.9\% lower one-month-ahead SPEI MSE across the three South Australian sites, with modest inference overhead.
\end{itemize}

\section{Related Work}

\textbf{Drought forecasting.}
Classical statistical models (e.g., ARIMA, SVMs) have long been used for drought indices \citep{su151511684} but depend on bespoke feature engineering and struggle with long-range dependencies. Deep architectures, including LSTMs \citep{Hochreiter1997}, Transformer-based forecasters \citep{vaswani2017attention,Zhou2021}, and recent spatiotemporal models \citep{tan2024droughtsetunderstandingdroughtspatialtemporal}, improve expressivity but typically require task-specific training or fine-tuning to a region and target index. In contrast, we study an \emph{inference-time} framework that adapts a frozen base model to drought forecasting without updating weights.

\textbf{Time-series foundation models (TSFMs).}
General-purpose TSFMs such as TimesFM \citep{das2024decoderonlyfoundationmodeltimeseries}, TimeGPT-1 \citep{garza2023timegpt1}, Lag-Llama \citep{lagllama2024}, Chronos \citep{chronos2024}, and Timer \citep{timer2024} aim for broad-domain forecasting with a single pre-trained backbone. Domain-specialized weather systems like GraphCast \citep{graphcast} and Pangu-Weather \citep{Bi2023} achieve strong numerical weather prediction but are not designed as drop-in forecasters for regional drought indices and usually involve re-training or domain-specific pipelines. We target the complementary setting of \emph{using} a general TSFM as-is and improving its outputs at inference time.

\textbf{Multi-scale and residual learning.}
Multi-scale techniques capture temporal dynamics at different resolutions, ranging from Scaleformer’s coarse-to-fine pathway \citep{shabani2023scaleformer} and TimeMixer’s decomposable multiscale mixing \citep{wang2024timemixer} to N-BEATS’ hierarchical residual stacks \citep{Oreshkin2020} and Minusformer’s progressive residual refinement \citep{liang2024minusformer}. In particular, TimeMixer decomposes multiscale series and mixes seasonal and trend components across fine-to-coarse and coarse-to-fine directions, showing the importance of explicitly exploiting complementary temporal resolutions. However, these methods typically realize multi-resolution behavior through dedicated architectures or training procedures. Our approach differs in \emph{where} multi-resolution acts: rather than redesigning or retraining the forecasting model, we implement coarse-to-fine \emph{refinement at inference time} on top of a frozen backbone, using residuals learned from short-window corrections to adjust long-window proposals.

\textbf{Inference-time adaptation.}
Test-time adaptation \citep{sun2020test} and lightweight prompting/tuning \citep{brown2020language} enable on-the-fly specialization while retaining a pre-trained model’s generality. For time series, few-shot or episodic adaptation has been explored mostly via fine-tuning \citep{oreshkin2021metalearning}. Our RGMR framework performs \emph{inference-time} adaptation: it composes multi-resolution proposals and residual corrections with adaptive residual weighting, preserving the pre-trained TSFM and avoiding any weight updates.

\section{Problem Definition}
\label{sec:problem}

We study the task of forecasting the Standardized Precipitation--Evapotranspiration Index (SPEI),
a widely used drought index that integrates precipitation and potential evapotranspiration into a
single standardized time series. Let $\{y_t\}_{t=1}^T$ denote the monthly SPEI values at a specific
region. We let the multivariate input vector $\mathbf{X}_t\in\mathbb{R}^{D}$ have its first channel set to the historical SPEI value $y_t$ and the remaining $D{-}1$ channels carry climate covariates, so the model conditions on past targets and past covariates jointly.

\paragraph{Multi-step forecasting.}
We consider forecasting the next $H$ steps. At forecast origin $t$, the goal is to predict the next $H$ SPEI values $\mathbf{y}_{t+1:t+H}\in\mathbb{R}^{H}$.
A frozen foundation model $f_\theta$ maps a length-$W$ context window to an $H$-step \emph{base forecast}:
\[
\widehat{\mathbf{y}}^{\mathrm{base}}_{t+1:t+H} = f_\theta\big(\mathbf{X}_{t-W+1:t}\big),
\qquad f_\theta:\,\mathbb{R}^{W\times D}\to\mathbb{R}^{H},
\]
where $\mathbf{X}_{t-W+1:t}\in\mathbb{R}^{W\times D}$ denotes the multivariate input context (with
past SPEI as its first channel; see above). The one-step case is recovered by setting $H=1$.
RGMR (Section~\ref{sec:rgmr}) refines this base forecast into a final prediction $\widehat{\mathbf{y}}^{(K)}_{t+1:t+H}$ through coarse-to-fine residual correction; throughout the paper, $\widehat{\mathbf{y}}^{(k)}$ refers exclusively to RGMR's level-$k$ refined output. The full notation used throughout the paper is summarized in
Appendix~\ref{appA:notation} (Table~\ref{tab:notation_app}).
\paragraph{Training-time vs.\ inference-time residuals.}
RGMR strictly separates residuals used for offline calibration from residuals available at test time.
(i)~The \emph{training residual} $\widetilde{\mathbf{R}}^{(k)}=\mathbf{y}-\widetilde{\mathbf{y}}^{(k)}$
uses observed targets before test evaluation, where $\widetilde{\mathbf{y}}^{(k)}$ is the
level-$k$ short-window proposal defined in Section~\ref{sec:residual_learning}; it is used only
to fit and validate the Ridge residual predictor.
(ii)~The \emph{predicted residual} $\widehat{\mathbf{R}}^{(k)}=g^{(k)}_{\phi}(\mathbf{z}^{(k)}_t)$ is the only
residual quantity available at inference, where $g^{(k)}_{\phi}$ and its feature vector
$\mathbf{z}^{(k)}_t$ are specified in Section~\ref{sec:residual_learning}. Test targets
$\mathbf{y}_{t+1:t+H}$ are never read when forming the current correction.

\section{Residual-Guided Multi-Resolution Refinement (RGMR)}
\label{sec:rgmr}

\subsection{Method Overview}

\begin{figure*}[!t]
  \centering
  \includegraphics[width=0.98\linewidth]{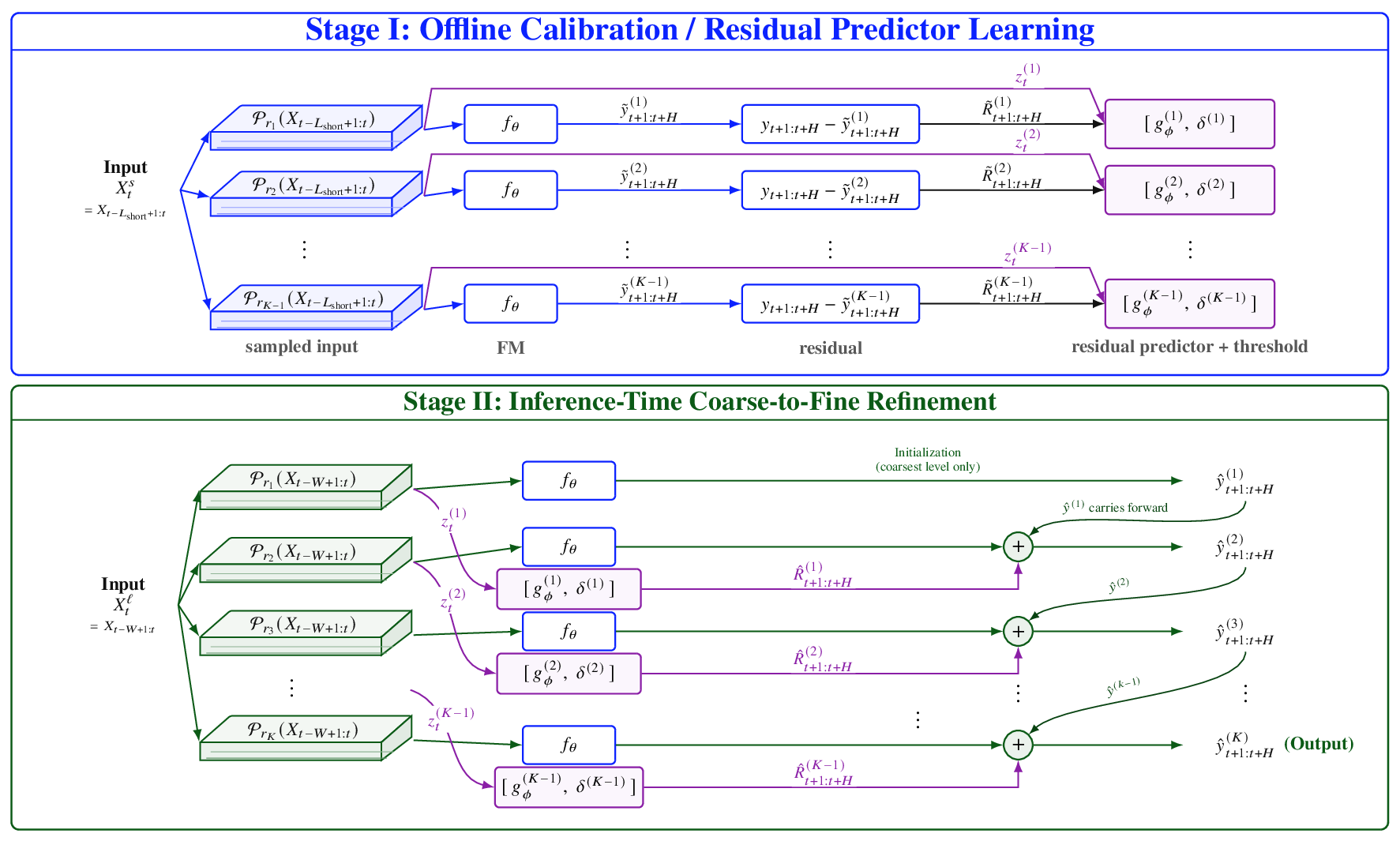}
\caption{
Two-stage workflow of RGMR.
\textbf{Stage I:} Offline calibration learns level-wise residual correction modules from short-window projected inputs.
At each resolution level, the frozen foundation model produces a short-window base forecast, and the observed forecast error is used to train a lightweight residual correction module.
Feature vectors and training residuals are used to fit the residual predictors and select their validation thresholds.
\textbf{Stage II:} At inference time, projected long-context inputs are processed by the frozen foundation model and the fixed residual correction modules.
RGMR starts from the coarsest forecast and performs boosting-style coarse-to-fine refinement: at each refinement step, the predicted residual guides the additive correction of the forecast carried from the previous level, producing the final output.
Boxes indicate FM, operators or learned models; unboxed quantities represent variables.
}
  \label{fig:framework}
\end{figure*}

RGMR is an inference-time framework that adapts \emph{frozen} time-series foundation models through hierarchical residual refinement; Figure~\ref{fig:framework} summarizes the two-stage workflow. We fix a resolution scale $\mathcal{R}=\{r_1=12, r_2=6, r_3=3, r_4=2, r_5=1\}$ months from coarse to fine. The five strides correspond to roughly annual ($r_1{=}12$), semi-annual ($r_2{=}6$), seasonal/sub-seasonal ($r_3{=}3$), bimonthly ($r_4{=}2$), and the native monthly resolution ($r_5{=}1$). These match the dominant temporal scales used by climatologists for SPEI analysis~\citep{vicente2010standardized}. For each $r_k$, the frozen backbone produces a full-context proposal. During offline calibration, we fit lightweight per-level residual predictors on short-window inputs to capture systematic, level-specific errors. At test time, no ground truth is used and no backbone weights are updated. Proposals are refined from coarse to fine by applying the learned residual predictors to the long-window proposals, with clipped adaptive weights and a fixed step-size rule. The following subsections detail the projection operator (4.2), residual learning and features (4.3), the inference-time update rule (4.4), and the contraction analysis (4.5).

\subsection{Multi-Resolution Projection Operations}
For each resolution level $r \in \mathcal{R}$, we apply temporal projection operations 
\begin{equation}
    \mathcal{P}_r = \mathcal{U}_r \circ \mathcal{D}_r,
\end{equation}
where $\mathcal{D}_r$ downsamples by averaging within non-overlapping length-$r$ segments and $\mathcal{U}_r$ upsamples by repeating values. Using zero-based indexing (so $ir=i\times r$) and letting $\lfloor\cdot\rfloor$ denote the floor operator, we have $\mathcal{D}_r(x)[i]=\frac{1}{r}\sum_{j=ir}^{(i+1)r-1}x[j]$ and $\mathcal{U}_r(\mathcal{D}_r(x))[j]=\mathcal{D}_r(x)[\lfloor j/r \rfloor]$ for complete blocks. If the context length is not divisible by $r$, the final partial block is averaged over its available entries and repeated only up to the original sequence length. Thus $\mathcal{P}_r(\mathbf{X}_{t-W+1:t})$ always has length $W$, and no future padding is introduced. This yields multi-resolution temporal views of the input while preserving cross-channel temporal alignment and remaining $O(W)$ per level.

\subsection{Residual Pattern Learning}
\label{sec:residual_learning}

For each forecast origin $t$ in the offline calibration period and each resolution level $k$,
we query the frozen backbone with a \emph{short} context to expose level-specific
systematic errors:
\begin{equation}
    \widetilde{\mathbf{y}}^{(k)}_{t+1:t+H} = f_\theta\bigl(\mathcal{P}_{r_k}(\mathbf{X}_{t-L_{\text{short}}+1:t})\bigr),
\end{equation}
and form the training residual $\widetilde{\mathbf{R}}^{(k)}_{t+1:t+H}=\mathbf{y}_{t+1:t+H}-\widetilde{\mathbf{y}}^{(k)}_{t+1:t+H}$. We set $L_{\text{short}}{=}12$ months, one annual cycle, so that the short-window proposal misses the multi-year climate context the long-window backbone exploits and therefore exhibits the structured biases (phase, drift, regime) that the residual predictor is asked to recognize. This value is fixed across all sites and is not tuned on test data.

For each level $k$ we then fit a closed-form Ridge predictor \citep{hoerl1970ridge}
$g^{(k)}_{\phi}(\mathbf{z}^{(k)}_t)=\boldsymbol{\Theta}^{(k)\top}\mathbf{z}^{(k)}_t$
	that maps a feature vector $\mathbf{z}^{(k)}_t\in\mathbb{R}^{d_z}$ to the $H$-step training residual, where $d_z$ is the feature dimension and $\boldsymbol{\Theta}^{(k)}\in\mathbb{R}^{d_z\times H}$ is the level-$k$ Ridge coefficient matrix,
\[
\begin{aligned}
\boldsymbol{\Theta}^{(k)}
&=\arg\min_{\boldsymbol{\Theta}\in\mathbb{R}^{d_z\times H}}
\sum_{t\in\text{Train}}
\big\|\widetilde{\mathbf{R}}^{(k)}_{t+1:t+H}
-\boldsymbol{\Theta}^{\top}\mathbf{z}^{(k)}_t\big\|_2^2 \\
&\qquad\qquad\qquad\qquad
+\lambda^{(k)}\|\boldsymbol{\Theta}\|_F^2 .
\end{aligned}
\]
	For each $\lambda^{(k)}$, Ridge coefficients are fitted on the training split and selected by time-ordered validation \citep{bergmeir2018cv} from the log-grid $\{10^{-4},10^{-3},\ldots,10^{2}\}$. Separately, we select a level-wise residual-magnitude threshold $\delta^{(k)}$ on the validation split, defined as a quantile of validation predicted-residual magnitudes and used as the soft-threshold midpoint in the adaptive residual-weighting gate in Eq.~\eqref{eq:weights}. 
Using the selected $\lambda^{(k)}$, the final residual predictor $g^{(k)}_\phi$ is refit on the union of the training and validation splits; the pair $(g^{(k)}_\phi,\delta^{(k)})$ is then frozen for test-time inference. No test targets are used for fitting or selection. The feature vector $\mathbf{z}^{(k)}_t$ (written $\mathbf{z}_t$ when the level is clear) concatenates: (i) the most recent $p_{\mathrm{lag}}$ targets $\mathbf{y}_{t-p_{\mathrm{lag}}+1:t}$; (ii) rolling mean and standard deviation over the last 12 months; (iii) a \emph{linear trend coefficient} computed as the ordinary least-squares slope of $\{y_{t-q+1},\dots,y_t\}$ regressed on $\{1,\dots,q\}$ (we use $q{=}12$), capturing local low-frequency drift not represented by the rolling mean; (iv) the most recent $p_{\mathrm{lag}}$ inference residuals available at the rolling origin (revealed in chronological order, never future). This keeps the whole residual stage closed-form and lightweight.

\subsection{Inference-Time Iterative Refinement}
At test time, RGMR uses the full context window $W$ (70\% of the series length). Let the resolution scale be $\mathcal{R}=\{r_1>\cdots>r_K\}$ with $r_1$ the coarsest. We initialize
\begin{equation}
\begin{aligned}
    \bar{\mathbf{y}}^{(1)}_{t+1:t+H}
    &= f_\theta\!\big(\mathcal{P}_{r_1}(\mathbf{X}_{t-W+1:t})\big),\\
    \widehat{\mathbf{y}}^{(1)}_{t+1:t+H}
    &= \bar{\mathbf{y}}^{(1)}_{t+1:t+H}.
\end{aligned}
\end{equation}
For each subsequent level $k=2,\ldots,K$, we query the frozen backbone at resolution $r_k$ to obtain a level-$k$ \emph{raw (un-refined) forecast},
$\bar{\mathbf{y}}^{(k)}_{t+1:t+H} = f_\theta\!\big(\mathcal{P}_{r_k}(\mathbf{X}_{t-W+1:t})\big)$,
where the bar marks a backbone proposal that has not yet been residual-corrected (in contrast to the refined $\widehat{\mathbf{y}}^{(k)}$ defined below; throughout, $\mathbf{y}$ is ground truth, $\widehat{\mathbf{y}}$ is the refined RGMR output, $\widetilde{\mathbf{y}}$ is the offline short-window calibration proposal, and $\bar{\mathbf{y}}$ is the inference-time raw long-window proposal).
We then form the \emph{predicted} residual
$\widehat{\mathbf{R}}^{(k-1)}_{t+1:t+H}=g^{(k-1)}_\phi(\mathbf{z}^{(k-1)}_t)$
(no ground-truth residual is accessible at test time).
We form adaptive elementwise weights
\begin{equation}
\label{eq:weights}
    \boldsymbol{\omega}^{(k-1)}
    = \mathrm{clip}_{[\varepsilon_{\min},\,1]}\!\left(
      \sigma\!\left(\gamma\left(\left|\widehat{\mathbf{R}}^{(k-1)}\right|
      -\delta^{(k-1)}\right)\right)\right),
\end{equation}
with $\sigma(x)=1/(1+e^{-x})$.
Each ingredient of Eq.~\eqref{eq:weights} has a specific role: (i) the inner term
$|\widehat{\mathbf{R}}^{(k-1)}|-\delta^{(k-1)}$ measures how far the predicted residual exceeds a
validation-selected magnitude threshold $\delta^{(k-1)}$, so corrections only apply where
prediction errors are likely large; (ii) $\sigma$ with small slope $\gamma$ (default $\gamma{=}3.0$) acts as a soft
threshold that smoothly interpolates between ``ignore'' and ``correct'', avoiding hard cutoffs;
(iii) $\mathrm{clip}_{[\varepsilon_{\min},1]}$ with $\varepsilon_{\min}{=}10^{-3}$ both prevents zero weights (which would freeze the
forecast at the foundation prediction) and bounds weights by 1 (which prevents
overcorrection). The threshold $\delta^{(k)}$ is a quantile of $|\widehat{\mathbf{R}}^{(k)}|$
selected on validation (grid $\{0.60,\dots,0.90\}$; details in App.~D).

The mixing coefficient is set by the conservative monotone schedule
\begin{equation}
\label{eq:alpha-schedule}
\alpha^{(k)}\;=\;0.3+0.5\!\left(1-\tfrac{r_k}{\max(\mathcal{R})}\right)\in[0.3,\,0.8],
\end{equation}
so that finer-resolution proposals receive a larger anchoring weight than coarser ones, while the
correction term still receives at least 20\% weight at the finest level. The constants $0.3$ and
$0.5$ define a smooth coarse-to-fine ramp.
We use a fixed step size $\eta^{(k)}\in(0,2]$ (default $\eta^{(k)}{=}1.0$ in the reported experiments); the analysis below states the corresponding non-expansive condition explicitly. Dropping the common horizon subscript $t{+}1{:}t{+}H$, we then update
\begin{equation}
\label{eq:rgmr_update}
\begin{aligned}
\widehat{\mathbf{y}}^{(k)}
&= \alpha^{(k)}\, \bar{\mathbf{y}}^{(k)} \\
&\quad + \big(1-\alpha^{(k)}\big)\!\left(
\widehat{\mathbf{y}}^{(k-1)}
+ \eta^{(k)}\,\boldsymbol{\omega}^{(k-1)}
\odot \widehat{\mathbf{R}}^{(k-1)}
\right).
\end{aligned}
\end{equation}

This approach progressively corrects errors, integrating multi-resolution backbone proposals with level-specific residual corrections.

\subsection{Theoretical Properties}
\label{sec:theory}

We next analyze why RGMR can improve across refinement levels. Under bounded proposal error and bounded residual-prediction noise, each residual-guided update admits an assumption-conditional contraction bound: the previous expected squared error is multiplied by a factor below one, up to an additive noise floor. Our analysis differs from standard contraction and fixed-point treatments \citep{granas2003fixed,bai2019deep} by composing level-specific contraction operators and absorbing residual-prediction noise as well as short-vs.-long window mismatch into a single bounded-second-moment error term. In particular, $\mathbf{e}^{(k-1)}$ below absorbs both the prediction noise of the Ridge regressor $g^{(k-1)}_\phi$ and any distribution shift between the short-window training residuals used to fit $g^{(k-1)}_\phi$ and the refined long-window residuals encountered at inference.

\begin{theorem}[Residual-guided contraction]
\label{thm:error_reduction}
Define the contraction factor
\begin{equation}
\rho^{(k)}=(1-\alpha^{(k)})\max_j\big|1-\eta^{(k)}\boldsymbol{\omega}^{(k-1)}_{j}\big|.
\end{equation}
Suppose $\widehat{\mathbf{R}}^{(k-1)}=\mathbf{R}^{(k-1)}+\mathbf{e}^{(k-1)}$,
where $\mathbf{R}^{(k-1)}=\mathbf{y}-\widehat{\mathbf{y}}^{(k-1)}$ is the
analysis-only inference residual, with
$\mathbb{E}\|\mathbf{e}^{(k-1)}\|_2^2<\infty$, and suppose the level-$k$ proposal error
$\mathbf{u}^{(k)}=\mathbf{y}-\bar{\mathbf{y}}^{(k)}$ has finite second moment. If
$\alpha^{(k)}\in(0,1)$, $\eta^{(k)}\in(0,2]$, and
$\boldsymbol{\omega}^{(k-1)}_{j}\in[0,1]$, then $\rho^{(k)}<1$ and
\begin{equation}
\label{eq:one_step_bound}
\begin{aligned}
\mathbb{E}\!\left[\|\mathbf{y}-\widehat{\mathbf{y}}^{(k)}\|_2^2\right]
\;&\le\; (1+\alpha^{(k)})(\rho^{(k)})^2\,
\mathbb{E}\!\left[\|\mathbf{y}-\widehat{\mathbf{y}}^{(k-1)}\|_2^2\right] \\
&\quad+\; B_k .
\end{aligned}
\end{equation}

where $B_k$ collects bounded contributions from the level-$k$ proposal error and residual-prediction noise. 
Moreover, since $\rho^{(k)}\le 1-\alpha^{(k)}$, the leading factor satisfies
$(1+\alpha^{(k)})(\rho^{(k)})^2 \le (1+\alpha^{(k)})(1-\alpha^{(k)})^2 < 1$,
ensuring a strict contraction at each level. 
(An explicit upper bound for $B_k$ is provided in the appendix.)
\end{theorem}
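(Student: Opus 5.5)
Substituting the update rule \eqref{eq:rgmr_update} into the level-$k$ error and using $\widehat{\mathbf{R}}^{(k-1)}=\mathbf{R}^{(k-1)}+\mathbf{e}^{(k-1)}$ with $\mathbf{R}^{(k-1)}=\mathbf{y}-\widehat{\mathbf{y}}^{(k-1)}$ gives an exact error recursion. Writing $\mathbf{y}=\alpha^{(k)}\mathbf{y}+(1-\alpha^{(k)})\mathbf{y}$, we obtain
\[
\mathbf{y}-\widehat{\mathbf{y}}^{(k)}
=\alpha^{(k)}\mathbf{u}^{(k)}
+(1-\alpha^{(k)})\,D^{(k)}\mathbf{R}^{(k-1)}
-(1-\alpha^{(k)})\eta^{(k)}\boldsymbol{\omega}^{(k-1)}\odot\mathbf{e}^{(k-1)},
\]
where $D^{(k)}=\mathrm{diag}\big(1-\eta^{(k)}\boldsymbol{\omega}^{(k-1)}_j\big)$. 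The elementwise weighting makes this operator diagonal. Consequently,
\[
\|(1-\alpha^{(k)})D^{(k)}\mathbf{v}\|_2\le\rho^{(k)}\|\mathbf{v}\|_2
\]
for every $\mathbf{v}$, and this is the only place $\rho^{(k)}$ enters. For the claim $\rho^{(k)}<1$: since $\eta^{(k)}\in(0,2]$ and $\boldsymbol{\omega}^{(k-1)}_j\in[0,1]$, we have $\eta^{(k)}\boldsymbol{\omega}^{(k-1)}_j\in[0,2]$. Hence $|1-\eta^{(k)}\boldsymbol{\omega}^{(k-1)}_j|\le 1$, so $\rho^{(k)}\le 1-\alpha^{(k)}<1$ because $\alpha^{(k)}>0$.

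Next we split off the noise terms with a weighted Young inequality $\|\mathbf{a}+\mathbf{b}\|^2\le(1+c)\|\mathbf{a}\|^2+(1+1/c)\|\mathbf{b}\|^2$. We take $\mathbf{a}=(1-\alpha^{(k)})D^{(k)}\mathbf{R}^{(k-1)}$, take $\mathbf{b}$ to be the sum of the proposal and noise terms, and set $c=\alpha^{(k)}$. This yields the factor $(1+\alpha^{(k)})(\rho^{(k)})^2$ on $\|\mathbf{R}^{(k-1)}\|^2$. The remainder $(1+1/\alpha^{(k)})\|\mathbf{b}\|^2$ is bounded by a second application of $\|\mathbf{p}+\mathbf{q}\|^2\le 2\|\mathbf{p}\|^2+2\|\mathbf{q}\|^2$. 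Using $|\boldsymbol{\omega}_j|\le1$, this gives
\[
B_k=\Big(1+\tfrac{1}{\alpha^{(k)}}\Big)\Big(2(\alpha^{(k)})^2\,\mathbb{E}\|\mathbf{u}^{(k)}\|^2+2(1-\alpha^{(k)})^2(\eta^{(k)})^2\,\mathbb{E}\|\mathbf{e}^{(k-1)}\|^2\Big),
\]
which is finite under the stated second-moment assumptions. Taking expectations then gives \eqref{eq:one_step_bound}.

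The main obstacle is that $\boldsymbol{\omega}^{(k-1)}$ is itself a function of $\widehat{\mathbf{R}}^{(k-1)}$, so $D^{(k)}$ is random and correlated with the errors. This is harmless here because the bound is pathwise. We use only $\|D^{(k)}\|_{\mathrm{op}}\le\max_j|1-\eta^{(k)}\boldsymbol{\omega}_j|$ realization by realization, then take expectations; no independence is needed. If $\rho^{(k)}$ is read as a random quantity, we would state the bound with its essential supremum, or use the deterministic bound $1-\alpha^{(k)}$, which is what the final claim uses anyway.

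For the final claim, the leading factor satisfies $(1+\alpha^{(k)})(\rho^{(k)})^2\le(1+\alpha^{(k)})(1-\alpha^{(k)})^2$. Writing $a=\alpha^{(k)}$, we have $(1+a)(1-a)^2=(1-a^2)(1-a)<1$ for $a\in(0,1)$, which gives the strict contraction.
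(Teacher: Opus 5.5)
Your proof is correct and follows the paper's argument step for step. It uses the same residual recursion with the diagonal operator $(1-\alpha^{(k)})(I-\eta^{(k)}\mathrm{Diag}(\boldsymbol{\omega}^{(k-1)}))$ and the same Young split: the paper's three-term inequality with $\delta_1=\alpha^{(k)}$, $\delta_2=1$ is exactly your $c=\alpha^{(k)}$ step followed by the factor-$2$ bound, which gives the same explicit $B_k$. The only difference is how the data-dependent weights are handled. The paper conditions on the context and the weights so that $\rho^{(k)}$ is fixed inside the expectation, whereas you bound pathwise and then pull $\rho^{(k)}$ out through its essential supremum or the deterministic bound $1-\alpha^{(k)}$, which is if anything the more careful of the two.
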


\begin{corollary}[Finest-level no-harm envelope]
\label{cor:finest}
Under the main setting $r_K=1$, the direct finest-resolution baseline is exactly the base forecast of Section~\ref{sec:problem}, $\widehat{\mathbf{y}}^{\mathrm{base}} = \bar{\mathbf{y}}^{(K)} = f_\theta(\mathbf{X}_{t-W+1:t})$, which performs no residual correction (no weights or step sizes).
Assume $\|\mathbf{y}-\bar{\mathbf{y}}^{(K)}\|_\infty \le \bar B_K$ and $\|\mathbf{e}^{(K-1)}\|_\infty\le E_{K-1}$.
The direct baseline then trivially satisfies
$\|\mathbf{y}-\widehat{\mathbf{y}}^{\mathrm{base}}\|_\infty \le \bar B_K$.
If, in addition,
\begin{equation}
\begin{aligned}
&\max_{j\in[H]}
\Bigl|1-\eta^{(K)}\,\boldsymbol{\omega}^{(K-1)}_{j}\Bigr|\,
\|\mathbf{y}-\widehat{\mathbf{y}}^{(K-1)}\|_\infty \\
&\quad + \eta^{(K)}\|\boldsymbol{\omega}^{(K-1)}\|_\infty\,E_{K-1}
\;\le\; \bar B_K,
\end{aligned}
\end{equation}
then the refined prediction is inside the same $\ell_\infty$ envelope:
\[
\|\mathbf{y}-\widehat{\mathbf{y}}^{(K)}\|_\infty
\;\le\; \bar B_K.
\]
\end{corollary}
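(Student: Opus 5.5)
The argument rests on one observation: the finest-level update in Eq.~\eqref{eq:rgmr_update} is a convex combination, with weights $\alpha^{(K)}$ and $1-\alpha^{(K)}$, of two error vectors, and each of them is bounded in $\ell_\infty$ by $\bar B_K$. I would prove the corollary in three steps.

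\textbf{Step 1: identify the baseline.} For $r_K=1$, every averaging block of $\mathcal{D}_1$ has length one, so $\mathcal{D}_1$ and $\mathcal{U}_1$ are both the identity and $\mathcal{P}_1=\mathrm{id}$. Hence $\bar{\mathbf{y}}^{(K)}=f_\theta(\mathbf{X}_{t-W+1:t})=\widehat{\mathbf{y}}^{\mathrm{base}}$. The bound $\|\mathbf{y}-\widehat{\mathbf{y}}^{\mathrm{base}}\|_\infty\le\bar B_K$ is then just the hypothesis on $\mathbf{u}^{(K)}=\mathbf{y}-\bar{\mathbf{y}}^{(K)}$.

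\textbf{Step 2: write the refined error in closed form.} I would subtract Eq.~\eqref{eq:rgmr_update} at $k=K$ from $\mathbf{y}$, writing $\mathbf{y}=\alpha^{(K)}\mathbf{y}+(1-\alpha^{(K)})\mathbf{y}$. Substituting the decomposition $\widehat{\mathbf{R}}^{(K-1)}=\mathbf{R}^{(K-1)}+\mathbf{e}^{(K-1)}$ from Theorem~\ref{thm:error_reduction}, with $\mathbf{R}^{(K-1)}=\mathbf{y}-\widehat{\mathbf{y}}^{(K-1)}$, gives
\[
\mathbf{y}-\widehat{\mathbf{y}}^{(K)}
=\alpha^{(K)}\mathbf{u}^{(K)}
+(1-\alpha^{(K)})\Bigl[(\mathbf{1}-\eta^{(K)}\boldsymbol{\omega}^{(K-1)})\odot\mathbf{R}^{(K-1)}
-\eta^{(K)}\boldsymbol{\omega}^{(K-1)}\odot\mathbf{e}^{(K-1)}\Bigr].
\]
This is the same algebra that underlies the contraction factor $\rho^{(K)}$ in Theorem~\ref{thm:error_reduction}.

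\textbf{Step 3: bound each coordinate.} I would take absolute values coordinate by coordinate and apply the triangle inequality. For each $j$ this uses
\[
|1-\eta^{(K)}\omega_j|\,|R_j|\le\max_{j'}|1-\eta^{(K)}\omega_{j'}|\,\|\mathbf{R}^{(K-1)}\|_\infty,
\qquad
\eta^{(K)}\omega_j|e_j|\le\eta^{(K)}\|\boldsymbol{\omega}^{(K-1)}\|_\infty E_{K-1}.
\]
Here $\omega_j\ge0$ (the clip gives $\omega_j\ge\varepsilon_{\min}>0$) and $\eta^{(K)}>0$, so no absolute value is needed on $\eta^{(K)}\omega_j$. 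The bracketed term therefore has $\ell_\infty$ norm at most the left-hand side of the corollary's displayed condition, which is at most $\bar B_K$. Combining the two pieces, each coordinate of $\mathbf{y}-\widehat{\mathbf{y}}^{(K)}$ is bounded by
\[
\alpha^{(K)}\bar B_K+(1-\alpha^{(K)})\bar B_K=\bar B_K.
\]
Taking the maximum over $j\in[H]$ gives the claim.

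The only step that needs care is confirming that the weights $\alpha^{(K)}$ and $1-\alpha^{(K)}$ are both nonnegative. By Eq.~\eqref{eq:alpha-schedule}, $r_K=1$ gives $\alpha^{(K)}=0.3+0.5(1-1/12)\in(0,1)$, so the convex-combination step is legitimate. The rest of the argument involves no expectation or second-moment calculation and is purely deterministic.
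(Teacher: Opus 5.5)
Your proposal is correct and follows the paper's proof essentially step for step. Like the paper, you subtract the level-$K$ update from $\mathbf{y}$, split $\widehat{\mathbf{R}}^{(K-1)}=\mathbf{R}^{(K-1)}+\mathbf{e}^{(K-1)}$, bound the terms in $\ell_\infty$ to obtain $\alpha^{(K)}\bar B_K+(1-\alpha^{(K)})[\cdots]$, and close with the displayed condition. The only differences are small: your coordinatewise triangle inequality stands in for the paper's $\infty\to\infty$ operator-norm identity for $I-\eta^{(K)}\mathrm{Diag}(\boldsymbol{\omega}^{(K-1)})$, and your checks that $\mathcal{P}_1$ is the identity and that $\alpha^{(K)}\in(0,1)$ make explicit what the paper takes for granted.
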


\subsection{Algorithm Description}

To operationalize RGMR, we split the procedure into two phases: 
(1) \emph{training} lightweight residual predictors using short-window contexts, 
and (2) \emph{inference-time refinement} over multiple temporal resolutions 
with the frozen foundation model. 
Algorithm~\ref{alg:rgmr_training} summarizes the training stage, 
where residual predictors are learned from systematic errors identified on SPEI sequences.

\begin{algorithm}[h]
\caption{\emph{RGMR Training: Residual Predictor Learning}}
\label{alg:rgmr_training}
\KwInput{Training series $\{(\mathbf{X}_s, y_s)\}_{s=1}^{T_{\mathrm{train}}}$, validation split, frozen model $f_\theta$, short window $L_{\mathrm{short}}{=}12$, resolution scale $\mathcal{R}=\{r_1,\dots,r_K\}$, horizon $H$}
\KwOutput{Level-wise residual predictors and thresholds $\{g^{(k)}_\phi,\delta^{(k)}\}_{k=1}^{K-1}$ (applied at inference iteration $k{+}1$)}

$\mathcal{D}^{(k)} \gets \emptyset$ for all $k \in \{1,\dots,K{-}1\}$\;

\For{each forecast origin $t$ in the training split with $t+H \le T_{\mathrm{train}}$}{
    \For{$k = 1$ \KwTo $K{-}1$}{
        $\widetilde{\mathbf{y}}^{(k)}_{t+1:t+H} \gets f_\theta\!\big(\mathcal{P}_{r_k}(\mathbf{X}_{t-L_{\mathrm{short}}+1:t})\big)$\;
        $\widetilde{\mathbf{R}}^{(k)}_{t+1:t+H} \gets \mathbf{y}_{t+1:t+H} - \widetilde{\mathbf{y}}^{(k)}_{t+1:t+H}$\;
        $\mathbf{z}^{(k)}_{t} \gets$ [recent targets, rolling mean/std, OLS linear-trend slope, residual history]\;
        $\mathcal{D}^{(k)} \gets \mathcal{D}^{(k)} \cup \{(\mathbf{z}^{(k)}_{t},\widetilde{\mathbf{R}}^{(k)}_{t+1:t+H})\}$\;
    }
}

\For{$k = 1$ \KwTo $K{-}1$}{
    Fitting candidate Ridge predictors on the training split and evaluating them on the validation split\;
    
    Select $\delta^{(k)}$ on the validation split using validation predicted-residual magnitudes\;
    
    Fit the final $g^{(k)}_\phi$ on the union of the training and validation splits;
}

\KwRet{$\{g^{(k)}_\phi,\delta^{(k)}\}_{k=1}^{K{-}1}$}\;
\end{algorithm}
Having obtained residual predictors, we apply them at inference time to refine
multi-resolution forecasts generated by the foundation model.
Algorithm~\ref{alg:rgmr_inference} describes this process:
starting from the coarsest resolution, RGMR progressively integrates finer-resolution predictions,
applies the predicted residual as a residual-guided correction, and uses the stability
conditions stated in Theorem~\ref{thm:error_reduction} to characterize the expected-error behavior under bounded-noise assumptions.

\begin{algorithm}[t]
\caption{\emph{RGMR Inference: Residual-Guided Multi-Resolution Refinement}}
\label{alg:rgmr_inference}
\KwInput{Context $\mathbf{X}_{t-W+1:t}$; frozen TSFM $f_\theta$; residual predictors and thresholds $\{g^{(k)}_\phi,\delta^{(k)}\}_{k=1}^{K-1}$; resolution scale $\mathcal{R}=\{r_1,\dots,r_K\}$; horizon $H$; mixing schedule $\alpha^{(k)}$ from Eq.~\eqref{eq:alpha-schedule}}
\KwOutput{RGMR final prediction $\widehat{\mathbf{y}}^{(K)}_{t+1:t+H}$}
$\bar{\mathbf{y}}^{(1)} \gets f_\theta(\mathcal{P}_{r_1}(\mathbf{X}_{t-W+1:t}))$\tcp*[r]{coarsest raw proposal}
$\widehat{\mathbf{y}}^{(1)} \gets \bar{\mathbf{y}}^{(1)}$\tcp*[r]{initialize refined forecast}
\For{$k \gets 2$ \KwTo $K$}{
    $\bar{\mathbf{y}}^{(k)} \gets f_\theta(\mathcal{P}_{r_k}(\mathbf{X}_{t-W+1:t}))$\tcp*[r]{level-$k$ raw base proposal}
    $\mathbf{z}^{(k-1)}_t \gets \mathrm{FeatExtract}\bigl(\mathbf{X}_{t-W+1:t},\widehat{\mathbf{y}}^{(k-1)},k{-}1,t\bigr)$\;
    $\widehat{\mathbf{R}}^{(k-1)} \gets g^{(k-1)}_\phi(\mathbf{z}^{(k-1)}_t)$\tcp*[r]{predicted residual; no ground truth used}
    $\boldsymbol{\omega}^{(k-1)} \gets \mathrm{clip}_{[\varepsilon_{\min},\,1]}\!\Big(\sigma\!\big(\gamma\bigl(|\widehat{\mathbf{R}}^{(k-1)}|-\delta^{(k-1)}\bigr)\big)\Big)$\tcp*[r]{Eq.~\eqref{eq:weights}}
    Use fixed step size $\eta^{(k)}{=}1.0$ (any $\eta^{(k)}\in(0,2]$ admissible by Thm.~\ref{thm:error_reduction})\;
    $\widehat{\mathbf{y}}^{(k)} \gets \alpha^{(k)}\bar{\mathbf{y}}^{(k)} + (1{-}\alpha^{(k)})\!\bigl(\widehat{\mathbf{y}}^{(k-1)}+\eta^{(k)}\boldsymbol{\omega}^{(k-1)}\!\odot\widehat{\mathbf{R}}^{(k-1)}\bigr)$\tcp*[r]{Eq.~\eqref{eq:rgmr_update}}
}
\KwRet{$\widehat{\mathbf{y}}^{(K)}_{t+1:t+H}$}\;
\end{algorithm}

Together, Algorithms~\ref{alg:rgmr_training} and \ref{alg:rgmr_inference}
instantiate the RGMR workflow: the training stage learns systematic level-specific biases on
short windows, and the inference stage applies these biases as a residual-guided correction on top
of long-window proposals. Under the stated stability and bounded-noise assumptions, this yields a
contraction-style expected-error bound without retraining the base model.


\subsection{Computational Complexity and Implementation}
Let $K=|\mathcal{R}|$ denote the number of resolution levels (in our setup $K=5$).
Per refinement cycle, RGMR performs $K$ calls to the frozen foundation model $f_\theta$ (one per level), $K{-}1$ residual predictions using a lightweight Ridge regressor, and projection operations whose total cost is linear in the context length. The total cost per forecast origin is therefore
\begin{equation}
\label{eq:complexity}
\mathcal{O}\!\left(K\,C_\theta \;+\; K\,W \;+\; (K{-}1)\,C_{\text{ridge}}\right),
\end{equation}
where $C_\theta$ is the cost of a single $f_\theta$ forward pass, $W$ is the input window length, and $C_{\text{ridge}}=\mathcal{O}(p_{\mathrm{lag}}+q)$ is the per-step cost of residual feature evaluation (using $p_{\mathrm{lag}}$ lags and a $q$-step trend) plus a closed-form Ridge prediction. In practice $C_{\text{ridge}}\!\ll\! C_\theta$, so the end-to-end runtime is dominated by the $K\,C_\theta$ term. The adaptive weighting and fixed step-size update contribute only element-wise operations that are negligible relative to $f_\theta$ calls.

\begin{figure*}[h]
    \centering
    \includegraphics[width=0.9\linewidth]{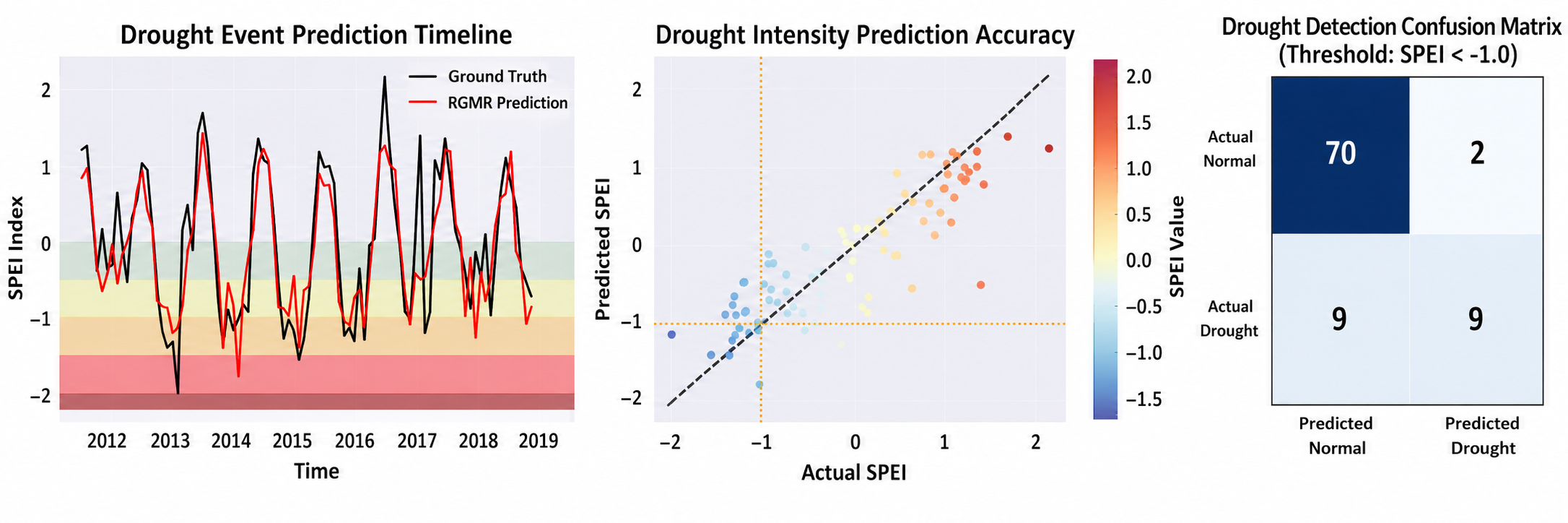}
    \caption{RGMR forecast quality and drought-event detection at a representative South Australia site.
Left: monthly SPEI timeline with ground truth (black) and RGMR prediction (red); shaded bands
indicate SPEI categories ranging from drought conditions at the bottom of the panel to wet
conditions at the top.
Middle: predicted vs.\ actual SPEI with $y{=}x$ (dashed); Pearson $r{=}0.83$; point color encodes SPEI.
Right: confusion matrix for drought detection at threshold $\mathrm{SPEI}< -1.0$ (precision $0.818$, recall $0.500$, F1 $0.621$); cell values are counts and colors reflect relative frequency.}
    \label{fig:rgmr-spei-results}
\end{figure*}

\begin{figure}[!htbp]
  \centering
  \includegraphics[width=\linewidth]{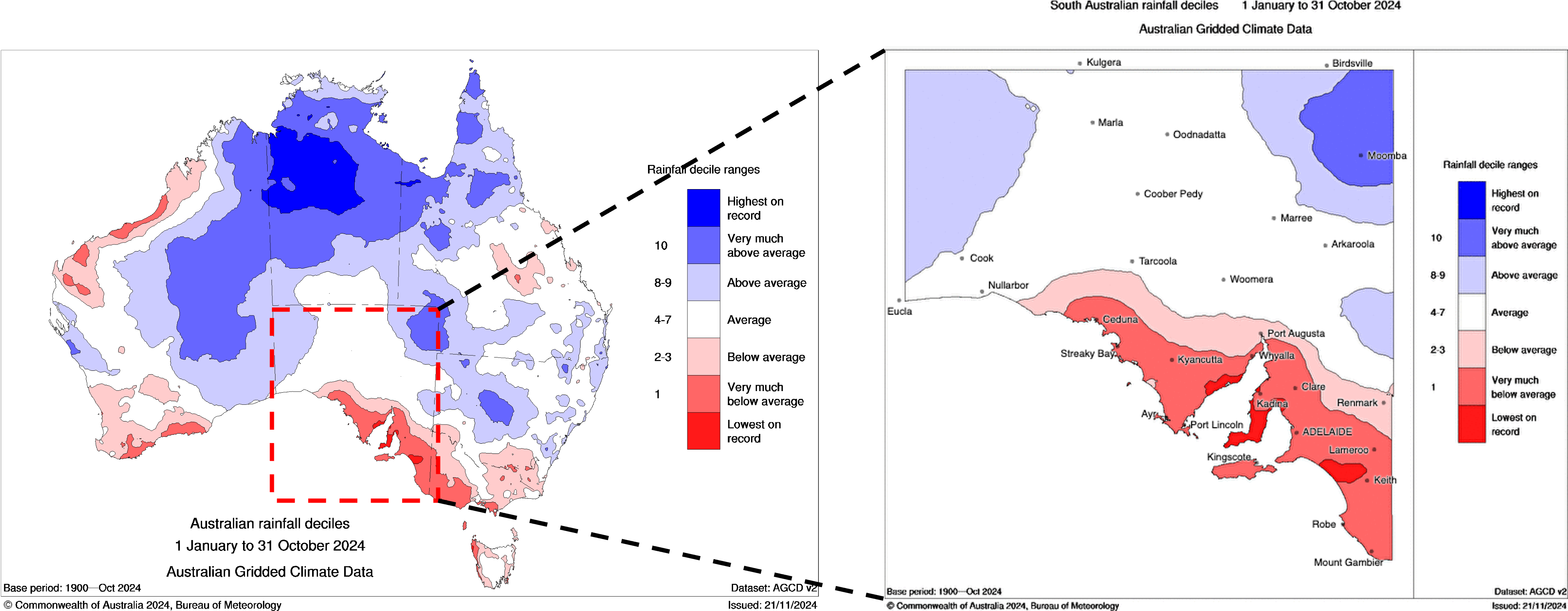}
  \caption{South Australia study-area context. The rainfall-decile map is used to illustrate
regional rainfall heterogeneity and the 2024 drought context.
}
  \label{fig:study_area}
\end{figure}

RGMR operates entirely at inference time: there are \emph{no parameter updates} to $f_\theta$,
enabling immediate deployment on existing time-series foundation models without retraining.
Hyperparameters are kept minimal and use conservative defaults (e.g., fixed resolution scale $\mathcal{R}$,
weight-floor $\varepsilon_{\min}$, and step-size range $\eta\!\in\!(0,2]$).
When selection is required (e.g., residual-weight threshold quantiles $\delta^{(k)}$ or
the Ridge penalty $\lambda^{(k)}$), we employ a \emph{single} time-ordered validation pass;
no test-time peeking or re-training of $f_\theta$ is involved. The adaptive weighting mechanism is
implemented via element-wise sigmoids and simple clipping, and thus contributes negligible overhead
compared to the $K$ forward calls of $f_\theta$.

\section{Experiments}
\label{sec:experiments}

We evaluate whether RGMR improves frozen foundation models for monthly-to-seasonal SPEI forecasting without parameter updates. The study targets overall accuracy against strong baselines, the effect of multi-resolution and residual refinement, the role of adaptive residual weighting, and computational overhead. We use rolling-origin evaluation with non-overlapping test windows (step size $s = H$); all standardization uses training-set statistics. Candidate residual predictors are fitted on the training split and selected on validation; after $\lambda^{(k)}$ and $\delta^{(k)}$ are fixed, the final residual predictors are refit on the union of training and validation splits and frozen before test evaluation. Test targets are never accessed at inference. Specifically, at each rolling origin $t$ the prediction uses only observations and inference residuals available up to month $t$; once $\mathbf{y}_{t+1:t+H}$ becomes observed, the corresponding residuals are appended to the residual-history feature store and may be consumed only by strictly later rolling origins, so no target inside the current horizon is used to form the current prediction. Datasets, regions, covariates, and splits are summarized in Appendix~C; implementation details and hyperparameters are in Appendix~D. Code is available at \url{https://github.com/Wentao-Gao/RGMR_implementation}.

\begin{table*}[h]
\centering
\caption{One-month-ahead SPEI forecasting at three South Australian locations. Best results in \textbf{bold}.}
\label{tab:main_results}
\resizebox{\textwidth}{!}{
\begin{tabular}{l*{9}{M}}
\toprule
\multirow{2}{*}{Method} 
& \multicolumn{3}{c}{Location 1 ($-26.125^\circ, 129.125^\circ$)} 
& \multicolumn{3}{c}{Location 2 ($-29.125^\circ, 134.875^\circ$)} 
& \multicolumn{3}{c}{Location 3 ($-35.625^\circ, 138.875^\circ$)} \\
& MSE$\downarrow$ & MAE$\downarrow$ & R$^2$$\uparrow$ 
& MSE$\downarrow$ & MAE$\downarrow$ & R$^2$$\uparrow$ 
& MSE$\downarrow$ & MAE$\downarrow$ & R$^2$$\uparrow$ \\
\midrule
\multicolumn{10}{c}{\textit{Multi-resolution baselines}} \\
N\textendash BEATS (5 stacks) & 0.482 & 0.497 & 0.505 & 0.439 & 0.525 & 0.589 & 0.668 & 0.590 & 0.409 \\
Scaleformer & 0.472 & 0.491 & 0.510 & 0.445 & 0.520 & 0.582 & 0.662 & 0.585 & 0.413 \\
\midrule
\multicolumn{10}{c}{\textit{Generic deep baselines}} \\
Transformer & 0.537 & 0.554 & 0.451 & 0.510 & 0.581 & 0.522 & 0.745 & 0.655 & 0.365 \\
Autoformer & 0.546 & 0.559 & 0.442 & 0.547 & 0.582 & 0.487 & 0.752 & 0.658 & 0.358 \\
Crossformer & 0.542 & 0.557 & 0.446 & 0.474 & 0.588 & 0.556 & 0.740 & 0.654 & 0.370 \\
DLinear & 0.533 & 0.551 & 0.455 & 0.423 & 0.553 & 0.604 & 0.735 & 0.650 & 0.375 \\
FiLM & 0.558 & 0.569 & 0.430 & 0.529 & 0.597 & 0.505 & 0.760 & 0.662 & 0.350 \\
iTransformer & 0.524 & 0.546 & 0.464 & 0.454 & 0.539 & 0.575 & 0.738 & 0.652 & 0.372 \\
PatchTST & 0.535 & 0.552 & 0.453 & 0.461 & 0.548 & 0.568 & 0.743 & 0.655 & 0.366 \\
TimesNet & 0.550 & 0.562 & 0.438 & 0.489 & 0.572 & 0.542 & 0.754 & 0.659 & 0.356 \\
TSMixer & 0.529 & 0.549 & 0.459 & 0.468 & 0.555 & 0.561 & 0.736 & 0.651 & 0.374 \\
\midrule
\multicolumn{10}{c}{\textit{Foundation models and RGMR}} \\
TabPFN-ts v1.0 & 0.462 & 0.503 & 0.521 & 0.389 & 0.445 & 0.632 & 0.566 & 0.562 & 0.442 \\
TabPFN-ts v1.0 + RGMR & 0.344 & 0.451 & 0.615 & 0.305 & 0.401 & 0.713 & 0.452 & 0.489 & 0.541 \\
TimeGPT-1 & 0.437 & 0.489 & 0.536 & 0.351 & 0.431 & 0.658 & 0.543 & 0.551 & 0.456 \\
TimeGPT-1 + RGMR & 0.325 & 0.437 & 0.632 & 0.278 & 0.386 & 0.738 & 0.431 & 0.475 & 0.562 \\
TimesFM v1.0 & 0.391 & 0.465 & 0.564 & 0.318 & 0.412 & 0.687 & 0.524 & 0.536 & 0.469 \\
TimesFM v1.0 + RGMR & \textbf{0.318} & \textbf{0.423} & \textbf{0.651} & \textbf{0.258} & \textbf{0.334} & \textbf{0.746} & \textbf{0.426} & \textbf{0.436} & \textbf{0.568} \\
\bottomrule
\end{tabular}
}
\end{table*}

\begin{table*}[h]
\centering
\scriptsize
\caption{Ablation of RGMR components on one-month-ahead SPEI forecasting (averaged across three South Australian sites).}
\label{tab:ablation_results}
\resizebox{\linewidth}{!}{
\begin{tabular}{lccccccccc}
\toprule
\multirow{2}{*}{Method} & \multicolumn{3}{c}{TabPFN-ts v1.0} & \multicolumn{3}{c}{TimeGPT-1} & \multicolumn{3}{c}{TimesFM v1.0} \\
& MSE↓ & MAE↓ & R²↑ & MSE↓ & MAE↓ & R²↑ & MSE↓ & MAE↓ & R²↑ \\
\midrule
Foundation Model (frozen) & 0.472 & 0.503 & 0.532 & 0.444 & 0.490 & 0.550 & 0.411 & 0.471 & 0.573 \\
+ Coarse projection only   & 0.436 & 0.482 & 0.570 & 0.410 & 0.467 & 0.590 & 0.387 & 0.448 & 0.604 \\
+ Multi-resolution (no refinement) & 0.414 & 0.469 & 0.592 & 0.390 & 0.455 & 0.615 & 0.372 & 0.433 & 0.627 \\
+ RGMR (no weights)        & 0.380 & 0.459 & 0.609 & 0.360 & 0.442 & 0.625 & 0.341 & 0.405 & 0.646 \\
+ RGMR (Full)              & \textbf{0.367} & \textbf{0.447} & \textbf{0.623} & \textbf{0.345} & \textbf{0.433} & \textbf{0.644} & \textbf{0.334} & \textbf{0.398} & \textbf{0.655} \\
\bottomrule
\end{tabular}
}
\end{table*}

\subsection{Setup}
\label{sec:setup}

We evaluate regional SPEI forecasting across multiple nominal climate zones within South Australia (arid, temperate) and decades of records to ensure geographic and climatic diversity; exact regions, time spans, and sample counts are reported in Appendix~C. Targets are SPEI at horizons $H$ months ahead; inputs are multivariate climate covariates and past SPEI within a window of length $W$.

\textbf{Study area.}
We focus on South Australia (as shown in Figure~\ref{fig:study_area}) as the study area for two main reasons.
First, the region is highly vulnerable to drought and interannual rainfall variability, making seasonal prediction societally important. 
Second, reliable meteorological records (NCEP--NCAR Reanalysis 1 data \citep{kalnay1996ncep}) are available for this region, ensuring reproducibility. 
Recent reports show that parts of South Australia recorded their driest year on record in 2024, with rainfall totals at multiple locations reaching historic lows between June 2024 and May 2025 \citep{SA_DriestYear2024, ABC_LowRainfallRecords2025}. 
Such severe and prolonged droughts underline the acute need for accurate climate forecasting in the region.

\textbf{Baselines and protocol.}
We compare against frozen foundation models, multi-resolution architectures, and generic deep baselines. \emph{Foundation-only} includes TimesFM \citep{das2024decoderonlyfoundationmodeltimeseries}, TimeGPT \citep{garza2023timegpt1}, and TabPFN \citep{priorlabs2025tabpfnts} on the raw window. \emph{Multi-resolution} baselines include N-BEATS \citep{Oreshkin2020} (5 stacks with standard trend/seasonality blocks) and Scaleformer \citep{shabani2023scaleformer} with learned scale selection. \emph{Generic deep} baselines used in Table~\ref{tab:main_results} include Transformer \citep{vaswani2017attention}, Autoformer \citep{wu2021autoformer}, Crossformer \citep{zhang2023crossformer}, DLinear \citep{zeng2023dlinear}, FiLM \citep{zhou2022film}, iTransformer \citep{liu2024itransformer}, PatchTST \citep{nie2023patchtst}, TimesNet \citep{wu2023timesnet}, and TSMixer \citep{chen2023tsmixer}. Unless stated otherwise we use the public reference implementations released with each method, gathered through the THUML Time-Series-Library \citep{thuml_tslib}. All methods use identical inputs, splits, and preprocessing; none updates $f_\theta$ during inference. We report MSE, MAE, and $R^2$ per site.

\subsection{Main Results}

\subsubsection{One-month-ahead SPEI}
Figure~\ref{fig:rgmr-spei-results} provides an overview of RGMR performance on South Australian SPEI, illustrating both forecasting quality and downstream drought-event detection. Table~\ref{tab:main_results} reports one-month-ahead results at three South Australian locations, including N-BEATS and Scaleformer as multi-resolution baselines.

Foundation models (TabPFN, TimeGPT, TimesFM) already outperform generic deep baselines, and applying RGMR on top of TimesFM yields the strongest performance across the reported metrics and sites. Specifically, MSE decreases from 0.391 to \textbf{0.318} at Location~1, from 0.318 to \textbf{0.258} at Location~2, and from 0.524 to \textbf{0.426} at Location~3. $R^2$ improves consistently. 

Compared directly with multi-resolution baselines, TimesFM+RGMR achieves lower MSE than both Scaleformer and N-BEATS at all sites, with similar gaps across other metrics. These results highlight RGMR's ability to deliver consistent gains beyond both generic deep baselines and established multi-resolution architectures.


Table~\ref{tab:ablation_results} averages results over the three locations and decomposes RGMR into its components. Using a single coarse resolution improves the foundation baselines, multi-resolution without residual refinement brings further gains, residual-guided updates add monotonic improvements, and adaptive residual weights attain the best overall performance. This progression is consistent with our design choices: under the bounded-noise assumptions in Theorem~\ref{thm:error_reduction}, the per-level update achieves contraction with modulus $\rho^{(k)}<1$ when $\eta^{(k)}\in(0,2]$ and $\boldsymbol{\omega}^{(k-1)}_j\in[0,1]$, and the multi-level behavior exhibits geometric decay to a noise floor in line with the contraction analysis.

\subsection{Generalization}

We further evaluate RGMR from three complementary perspectives: refinement behavior, computational cost, and sensitivity to resolution design, with additional experiments provided in Appendix~G, in order to assess both effectiveness and practicality. 

\subsubsection{Sample-wise Improvement Diagnostic}

\begin{table}[!htbp]
\centering
\scriptsize
\caption{Per-level sample-wise improvement diagnostic on the South Australian held-out test set. The ``Sample-wise improvement ratio'' counts how often the per-sample squared error strictly decreases from refinement level $k{-}1$ to $k$.}
\label{tab:contraction}
\begin{tabular}{ccc}
\toprule
Refinement level $k$ & Resolution stride $r_k$ & \makecell{Sample-wise\\improvement ratio} \\
\midrule
2 & 6 & 39.7\% \\
3 & 3 & 79.4\% \\
4 & 2 & 66.2\% \\
5 & 1 & 48.5\% \\
\bottomrule
\end{tabular}
\end{table}

To empirically examine the refinement behavior we compute, for each refinement level $k$, the fraction of test origins whose squared error \emph{strictly} decreases from level $k{-}1$ to level $k$,
\[
\frac{1}{N}\sum_{i=1}^N\mathbf{1}\!\bigl\{\bigl\|\mathbf{y}_i-\widehat{\mathbf{y}}^{(k)}_i\bigr\|_2^2 < \bigl\|\mathbf{y}_i-\widehat{\mathbf{y}}^{(k-1)}_i\bigr\|_2^2\bigr\}.
\]
The ratios in Table~\ref{tab:contraction} fall between roughly 40\% and 80\%, with the largest jump at $k{=}3$. Two clarifications matter. First, at the finest stride $r_5{=}1$ the level-4 prediction is already close to ground truth on many easy samples, so further refinement is mostly noise-bound; the 48.53\% sample-wise rate is therefore consistent with, not a violation of, the theorem. Second, Theorem~\ref{thm:error_reduction} controls the \emph{expected} squared error in $\ell_2$, which is what our aggregated MSE in Tables~\ref{tab:main_results} and \ref{tab:ablation_results} verifies, while Table~\ref{tab:contraction} is a per-sample diagnostic.

\subsubsection{Latency and Computational Efficiency}

Table~\ref{tab:latency} reports GPU latency measured within the command \texttt{torch.cuda.Event}. Despite multiple forward passes, RGMR introduces only $\sim$60\,ms additional overhead, which is negligible for monthly drought forecasting.

\begin{table}[!htbp]
\centering
\scriptsize
\caption{GPU latency per rolling forecast origin on a single RTX 3090.}
\label{tab:latency}
\begin{tabularx}{0.75\linewidth}{l *{2}{>{\centering\arraybackslash}X}}
\toprule
Model & Mean (ms) & Std (ms) \\
\midrule
TimesFM v1.0 & 14.52 & 2.18 \\
TimesFM v1.0 + RGMR    & 74.89 & 5.00 \\
\bottomrule
\end{tabularx}
\end{table}

\subsubsection{Sensitivity to Resolution Scale Choice}

Table~\ref{tab:ladders} compares five resolution scale variants.
RGMR achieves stable performance across all five variants, indicating low sensitivity to the specific resolution-scale choice.

\begin{table}[h]
\centering
\scriptsize
\caption{Resolution-scale sensitivity at Location~1. We use ``resolution scale'' for the ordered set of strides $\mathcal{R}$ used during refinement, ``resolution level'' for an individual element of $\mathcal{R}$ (indexed $k=1,\dots,K$ from coarsest to finest), and ``stride'' for the numerical value $r_k$, consistent with Table~\ref{tab:notation_app} (Appendix~\ref{appA:notation}) and Table~\ref{tab:contraction}.}
\label{tab:ladders}
\begin{tabularx}{0.85\linewidth}{l l *{3}{>{\centering\arraybackslash}X}}
\toprule
Variant & Resolution scale & MSE↓ & MAE↓ & R$^2$↑ \\
\midrule
S1 & [12,6,3,2,1]   & \textbf{0.318} & \textbf{0.423} & \textbf{0.651} \\
S2 & [16,8,4,2,1]   & 0.324 & 0.439 & 0.644 \\
S3 & [12,4,2,1]     & 0.322 & 0.438 & 0.645 \\
S4 & [12,6,3,1]     & 0.330 & 0.443 & 0.637 \\
S5 & [12,6,4,3,1]   & 0.330 & 0.443 & 0.636 \\
\bottomrule
\end{tabularx}
\end{table}

\section{Conclusion}

This paper introduced RGMR, a residual-guided multi-resolution refinement framework that adapts frozen time series foundation models to regional drought forecasting. RGMR iteratively projects predictions across multiple temporal scales and corrects them with predicted, test-time, ground-truth-free residuals, allowing the model to capture long-term climate structure and short-term variability without updating base model parameters. Under the stated stability and bounded-noise assumptions, our analysis gives a contraction-style bound on the expected squared point-forecast error across refinement levels until reaching a noise-determined floor. The strong benchmark performance suggests that RGMR may offer a general and practical direction for adapting frozen time-series foundation models to broader regional climate forecasting tasks.

\section*{Impact Statement}
This paper aims to advance the use of frozen time-series foundation models for regional
climate forecasting, specifically drought monitoring via SPEI. The most direct positive
consequences are in (i) water-allocation and reservoir-operation planning under prolonged
dry spells, (ii) seasonal agricultural and irrigation scheduling in water-limited regions,
(iii) drought early-warning systems for emergency-resource preparation, and (iv) supporting
adaptation planning for regional and remote communities most exposed to climate variability.
Because the wrapper does not modify the foundation backbone, it can be retrofitted into existing
operational pipelines at low cost. We also acknowledge that any operational forecasting tool can
be misused if treated as the sole basis for high-stakes water-rights, insurance, or
emergency-response decisions; we therefore advocate using RGMR as one input to a
multi-evidence decision process, not as a standalone authority.

\section*{Acknowledgments}
This work was supported by the ARC Discovery Project DP230101122 and the Adelaide University Research Training Program (RTP) Scholarship. We gratefully acknowledge the continued support from the CSIRO Environment Research Unit and the Data61 Business Unit.

\bibliography{1}
\bibliographystyle{icml2026}


\newpage
\appendix
\onecolumn

\section{Notation and Projection Operators}
\addcontentsline{toc}{section}{Appendix A\quad Notation and Projection Operators}
\label{app:notation_projection}

\subsection{Notation}
\label{appA:notation}

Table~\ref{tab:notation_app} summarizes the main notation used in the paper; auxiliary symbols introduced only locally (e.g., $T_{\mathrm{train}}$, $\mathcal{D}^{(k)}$, $C_\theta$, and $C_{\text{ridge}}$) are defined at their first use in the algorithms or complexity discussion. Conventions: dimensions are stated explicitly; level-indexed prediction and residual quantities are vectors in $\mathbb{R}^{H}$ at a fixed forecast origin $t$.

\begin{table}[!htbp]
\centering
\scriptsize
\caption{Summary of the main notation used in the paper.}
\label{tab:notation_app}
\begin{tabular}{p{0.30\linewidth} p{0.62\linewidth}}
\toprule
\textbf{Symbol} & \textbf{Meaning} \\
\midrule
\multicolumn{2}{l}{\textit{Inputs, targets, and backbone}} \\
$T$ & length of the full SPEI series (months) \\
$W$ & full inference context length passed to $f_\theta$ \\
$L_{\text{short}}$ & short context length used during offline calibration to fit residual predictors (default $12$ months) \\
$D$ & feature dimension of $\mathbf{X}_t$ (past SPEI plus climate covariates) \\
$H$ & forecast horizon ($H{=}1$ in the main results) \\
$\mathbf{X}_t\in\mathbb{R}^{D}$ & input vector at time $t$; first channel is the historical SPEI value $y_t$, remaining $D{-}1$ channels are climate covariates \\
$\mathbf{X}_{t-W+1:t}\in\mathbb{R}^{W\times D}$ & multivariate input window ending at the forecast origin $t$ (training-split standardization) \\
$\mathbf{y}_{t+1:t+H}\in\mathbb{R}^{H}$ & target SPEI vector for the next $H$ months \\
$f_\theta$ & frozen TSFM backbone (no parameter updates at inference) \\
\midrule
\multicolumn{2}{l}{\textit{Resolution scale and projections}} \\
$\mathcal{R}=\{r_1>\cdots>r_K\}$ & resolution scale from coarse to fine; $K=|\mathcal{R}|$, $r_K{=}1$ is the native finest level (main setting: $\mathcal{R}=\{12,6,3,2,1\}$) \\
$\mathcal{D}_{r},\,\mathcal{U}_{r}$ & block-average down-projection / repeat up-projection at stride $r$ \\
$\mathcal{P}_{r}=\mathcal{U}_r\!\circ\!\mathcal{D}_r$ & projection operator at stride $r$ (used for both training and inference proposals) \\
\midrule
\multicolumn{2}{l}{\textit{Predictions and residuals (training- vs.\ inference-time)}} \\
$\bar{\mathbf{y}}^{(k)}\in\mathbb{R}^{H}$ & level-$k$ \emph{raw (un-refined) base proposal} at inference: $\bar{\mathbf{y}}^{(k)}=f_\theta(\mathcal{P}_{r_k}(\mathbf{X}_{t-W+1:t}))$; the bar marks a backbone proposal that has not yet been residual-corrected \\
$\widehat{\mathbf{y}}^{(k)}\in\mathbb{R}^{H}$ & \emph{refined} prediction at inference after refinement level $k$, with $\widehat{\mathbf{y}}^{(1)}{=}\bar{\mathbf{y}}^{(1)}$; the final RGMR output is $\widehat{\mathbf{y}}^{(K)}$ \\
$\widehat{\mathbf{y}}^{\mathrm{base}}\in\mathbb{R}^{H}$ & single-pass base forecast from $f_\theta$ on the full $W$-context: $\widehat{\mathbf{y}}^{\mathrm{base}}=f_\theta(\mathbf{X}_{t-W+1:t})$; introduced in Section~\ref{sec:problem}, and (since $r_K{=}1$ in the main setting) it coincides with $\bar{\mathbf{y}}^{(K)}$ and serves as the no-RGMR direct baseline in Corollary~\ref{cor:finest} \\
$\widetilde{\mathbf{y}}^{(k)}\in\mathbb{R}^{H}$ & short-window proposal computed during offline calibration: $\widetilde{\mathbf{y}}^{(k)}=f_\theta(\mathcal{P}_{r_k}(\mathbf{X}_{t-L_{\text{short}}+1:t}))$ \\
$\widetilde{\mathbf{R}}^{(k)}\in\mathbb{R}^{H}$ & \emph{training} residual: $\widetilde{\mathbf{R}}^{(k)}=\mathbf{y}-\widetilde{\mathbf{y}}^{(k)}$ (used before test evaluation; uses observed ground truth only from the training/validation periods) \\
$\widehat{\mathbf{R}}^{(k)}\in\mathbb{R}^{H}$ & \emph{predicted} residual at inference: $\widehat{\mathbf{R}}^{(k)}=g^{(k)}_{\phi}(\mathbf{z}^{(k)}_t)$. The only residual quantity available at test time; ground truth $\mathbf{y}_{t+1:t+H}$ is never accessed. \\
$\mathbf{R}^{(k)}\in\mathbb{R}^{H}$ & inference residual w.r.t.\ ground truth: $\mathbf{R}^{(k)}=\mathbf{y}-\widehat{\mathbf{y}}^{(k)}$; an analysis-only quantity and the target of the residual predictor $g^{(k)}_\phi$. Never accessed by the algorithm at test time. \\
$\mathbf{e}^{(k)}\in\mathbb{R}^{H}$ & residual-prediction error of $g^{(k)}_{\phi}$: $\widehat{\mathbf{R}}^{(k)}=\mathbf{R}^{(k)}+\mathbf{e}^{(k)}$, zero-mean with bounded second moment; it includes Ridge prediction noise and short-vs.-long window mismatch \\
$\mathbf{u}^{(k)}\in\mathbb{R}^{H}$ & level-$k$ proposal error: $\mathbf{u}^{(k)}=\mathbf{y}-\bar{\mathbf{y}}^{(k)}$ \\
$g^{(k)}_{\phi},\,\boldsymbol{\Theta}^{(k)}$ & level-$k$ Ridge residual predictor and its coefficient matrix $\boldsymbol{\Theta}^{(k)}\in\mathbb{R}^{d_z\times H}$; fitted on the training split for validation selection and refit on training+validation before test inference \\
$\mathbf{z}^{(k)}_t\in\mathbb{R}^{d_z}$ & feature vector for the level-$k$ residual predictor (written $\mathbf{z}_t$ when $k$ is clear): concatenation of (i) most recent targets, (ii) rolling mean/std, (iii) OLS linear-trend slope, and (iv) most recent inference-residual lags \\
$p_{\mathrm{lag}}$ & number of lagged target and residual values used in $\mathbf{z}^{(k)}_t$ (default $6$) \\
\midrule
\multicolumn{2}{l}{\textit{Update parameters and stability quantities}} \\
$\boldsymbol{\omega}^{(k)}\in[0,1]^{H}$ & adaptive elementwise residual weight vector at level $k$: $\boldsymbol{\omega}^{(k)}=\mathrm{clip}_{[\varepsilon_{\min},1]}(\sigma(\gamma(|\widehat{\mathbf{R}}^{(k)}|-\delta^{(k)})))$ \\
$\gamma$ & sigmoid sharpness (small constant, default $3.0$) \\
$\delta^{(k)}$ & validation-selected quantile of $|\widehat{\mathbf{R}}^{(k)}|$ used as the soft-threshold midpoint \\
$\varepsilon_{\min}$ & weight floor for numerical stability (default $10^{-3}$) \\
$\alpha^{(k)}\in(0,1)$ & mixing coefficient at level $k$ (monotone increasing in $1/r_k$; main schedule $\alpha^{(k)}=0.3+0.5(1-r_k/\max\mathcal{R})$) \\
$\eta^{(k)}\in(0,2]$ & fixed step size at level $k$ (default $1.0$ in the reported experiments) \\
$\rho^{(k)}$ & contraction factor at level $k$, $\rho^{(k)}=(1-\alpha^{(k)})\max_j|1-\eta^{(k)}\boldsymbol{\omega}^{(k-1)}_j|$; \emph{induced} by $\alpha,\eta,\boldsymbol{\omega}$ rather than separately tuned \\
$B_k$ & bounded additive error term at level $k$ (proposal error and residual-prediction noise; explicit upper bound in App.~B) \\
$\bar B_K$ & deterministic $\ell_\infty$ upper bound on the level-$K$ backbone error, $\|\mathbf{y}-\bar{\mathbf{y}}^{(K)}\|_\infty\le \bar B_K$; used in the finest-level no-harm envelope of Corollary~\ref{cor:finest} (distinct from the expectation-$\ell_2$ quantity $B_k$ in Theorem~\ref{thm:error_reduction}) \\
$\sigma_e^2$ & validation upper bound on $\mathbb{E}\|\mathbf{e}^{(k)}\|_2^2$ (residual-prediction noise floor) \\
$E_{K-1}$ & uniform $\ell_\infty$ bound on $\mathbf{e}^{(K-1)}$ used in Corollary~\ref{cor:finest} \\
$\boldsymbol{\Omega}^{(k)}$ & $\mathrm{Diag}(\boldsymbol{\omega}^{(k)})$ \\
\bottomrule
\end{tabular}
\end{table}

\paragraph{Conventions.}
$\|\cdot\|_2$ is the Euclidean norm for vectors and the spectral (operator) norm for matrices.
$\odot$ is the elementwise (Hadamard) product. $\mathrm{Diag}(\mathbf{v})$ forms a diagonal matrix
from the entries of $\mathbf{v}$. $\sigma(x)=1/(1+e^{-x})$ is the standard logistic.

\noindent\textit{Unification note.} The validation-selected residual-magnitude threshold is denoted $\delta^{(k)}$ throughout the paper; the logistic uses $\sigma(x)=1/(1+e^{-x})$ with fixed sharpness $\gamma=3.0$; weights are clipped elementwise to $[\varepsilon_{\min},1]$ with $\varepsilon_{\min}>0$ by default.

\subsection{Projection used in the main experiments (simple average \& repeat)}
\label{appA:projection}
All main results use the simple projection
\[
\mathcal{P}_r \;=\; \mathcal{U}_r \circ \mathcal{D}_r,
\]
where $\mathcal{D}_r$ averages non-overlapping blocks of length $r$ and
$\mathcal{U}_r$ repeats each block average $r$ times. If the context length is not divisible by $r$,
the final partial block is averaged over its available entries and repeated only up to the original
sequence length, so no future padding is used. All channels share the same timing to preserve
cross-channel coherence. This projection is deterministic, lightweight ($O(W)$ per level), and is used
\emph{throughout the paper and all ablations} unless stated otherwise.

\begin{lemma}[Boundedness of the simple projection]
\label{lem:bounded_simpleP}
Let $\mathcal{D}_r$ be block-averaging (row-stochastic) and $\mathcal{U}_r$ be repeat-upsampling. Then
$\|\mathcal{D}_r\|_2\le1$, $\|\mathcal{U}_r\|_2\le \sqrt{r}$, and hence
$\|\mathcal{P}_r\|_2=\|\mathcal{U}_r\mathcal{D}_r\|_2 \le \|\mathcal{U}_r\|_2\|\mathcal{D}_r\|_2 \le \sqrt{r}$.
Moreover, because $\mathcal{P}_r$ replaces each block by its within-block mean, it is an
idempotent orthogonal projection onto the block-constant subspace, and therefore
$\|\mathcal{P}_r\|_2\le1$.
\end{lemma}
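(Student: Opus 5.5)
We work with the matrix representations of the operators on $\mathbb{R}^W$, one channel at a time. The projections act identically and independently on each channel, so the multichannel operator norm equals the single-channel one. We first treat complete blocks, then the final partial block.

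\textbf{Step 1: the bound $\|\mathcal{D}_r\|_2\le 1$.} Write $\mathcal{D}_r$ as a $\lceil W/r\rceil\times W$ matrix whose $i$-th row has entries $1/n_i$ on block $i$, where $n_i\in\{1,\dots,r\}$ is the block size. For the $i$-th output coordinate, Cauchy--Schwarz gives
\[
\Bigl(\tfrac{1}{n_i}\textstyle\sum_{j\in b_i}x_j\Bigr)^2\le \tfrac{1}{n_i}\textstyle\sum_{j\in b_i}x_j^2\le \sum_{j\in b_i}x_j^2 .
\]
Summing over the disjoint blocks gives $\|\mathcal{D}_r x\|_2^2\le\|x\|_2^2$.

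\textbf{Step 2: the bound $\|\mathcal{U}_r\|_2\le\sqrt r$.} For $v\in\mathbb{R}^{\lceil W/r\rceil}$, repeating each entry $n_i\le r$ times gives
\[
\|\mathcal{U}_rv\|_2^2=\sum_i n_i v_i^2\le r\|v\|_2^2 .
\]
Submultiplicativity of the operator norm then yields $\|\mathcal{P}_r\|_2\le\sqrt r$.

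\textbf{Step 3: the sharper claim.} We show that $\mathcal{P}_r$ is an orthogonal projection.
\begin{itemize}
\item \emph{Block-diagonal form.} $\mathcal{P}_r=\mathcal{U}_r\mathcal{D}_r$ is block diagonal with blocks $\frac{1}{n_i}\mathbf{1}\mathbf{1}^\top$ of size $n_i$. This is exactly where the partial-block convention matters: the last block is averaged over its $n_i$ available entries and repeated exactly $n_i$ times. That choice makes its block $\frac{1}{n_i}\mathbf{1}\mathbf{1}^\top$ rather than something non-symmetric.
\item \emph{Symmetry.} Each block $\frac{1}{n_i}\mathbf{1}\mathbf{1}^\top$ is symmetric, so $\mathcal{P}_r$ is symmetric.
\item \emph{Idempotence.} Each block satisfies $(\frac{1}{n_i}\mathbf{1}\mathbf{1}^\top)^2=\frac{1}{n_i^2}\mathbf{1}(\mathbf{1}^\top\mathbf{1})\mathbf{1}^\top=\frac{1}{n_i}\mathbf{1}\mathbf{1}^\top$. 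Equivalently, $\mathcal{D}_r\mathcal{U}_r=I$ because averaging repeated copies returns the value.
\item \emph{Range.} The range of $\mathcal{P}_r$ is the block-constant subspace, since every block-constant vector is fixed by $\mathcal{P}_r$.
\end{itemize}
A symmetric idempotent matrix has eigenvalues in $\{0,1\}$. Hence $\|\mathcal{P}_r\|_2\le 1$, with equality unless $\mathcal{P}_r=0$, which cannot occur.

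\textbf{Main obstacle.} The only real care point is the partial-block bookkeeping in Step 3. The remaining steps are routine.
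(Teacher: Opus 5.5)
Your proof is correct and follows the same route as the paper: bound $\mathcal{D}_r$ and $\mathcal{U}_r$ separately, combine them by submultiplicativity, then identify $\mathcal{P}_r=\mathcal{U}_r\mathcal{D}_r$ as the symmetric idempotent (hence orthogonal) projection onto block-constant vectors. Your Cauchy--Schwarz step is more careful than the paper's appeal to row-stochasticity, which on its own does not bound the spectral norm; what makes $\|\mathcal{D}_r\|_2\le 1$ true is that the blocks are disjoint, so each column has a single nonzero entry, and you also check symmetry explicitly where the paper's phrase ``orthogonal projection'' takes it for granted.
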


\bigskip

\section{Proofs and Technical Details}
\addcontentsline{toc}{section}{Appendix B\quad Proofs and Technical Details}
\label{app:proofs}

\subsection{RGMR update and residual recursion}
\label{appB:update}
Throughout this appendix, $K$ denotes the final refinement level (corresponding to the finest
resolution stride $r_K{=}1$). We restate the RGMR update from Section~\ref{sec:rgmr} (suppressing
the origin index $t$ for clarity):
\begin{equation}
\widehat{\mathbf{y}}^{(k)}
\;=\; \alpha^{(k)} \bar{\mathbf{y}}^{(k)}
\;+\; (1-\alpha^{(k)})\Big(\widehat{\mathbf{y}}^{(k-1)} + \eta^{(k)} \, \boldsymbol{\omega}^{(k-1)} \odot \widehat{\mathbf{R}}^{(k-1)}\Big),
\qquad k\ge2,\quad \widehat{\mathbf{y}}^{(1)}=\bar{\mathbf{y}}^{(1)}.
\label{eq:rgmr_update_appendix}
\end{equation}
For the analysis, we view the frozen residual predictor used at level $k{-}1$ as an estimator of the
current inference residual; any mismatch between the short-window residuals used to train the Ridge
predictor and the long-window refined residual at inference is included in the residual-prediction
error term. Here the predicted residual is decomposed as
$\widehat{\mathbf{R}}^{(k-1)}=\mathbf{R}^{(k-1)}+\mathbf{e}^{(k-1)}$
with $\mathbf{R}^{(k-1)}=\mathbf{y}-\widehat{\mathbf{y}}^{(k-1)}$ and $\mathbf{e}^{(k-1)}$ denoting residual-prediction error (zero mean, bounded second moment).
Let $\boldsymbol{\Omega}^{(k-1)}=\mathrm{Diag}(\boldsymbol{\omega}^{(k-1)})$.
Subtracting \eqref{eq:rgmr_update_appendix} from $\mathbf{y}$ yields the residual recursion
\begin{equation}
\label{eq:residual_recursion}
\mathbf{R}^{(k)}
\;=\; (1-\alpha^{(k)})\!\big(I-\eta^{(k)}\boldsymbol{\Omega}^{(k-1)}\big)\mathbf{R}^{(k-1)}
\;-\; (1-\alpha^{(k)})\eta^{(k)} \boldsymbol{\Omega}^{(k-1)} \mathbf{e}^{(k-1)}
\;+\; \alpha^{(k)}\underbrace{(\mathbf{y}-\bar{\mathbf{y}}^{(k)})}_{\displaystyle \mathbf{u}^{(k)}}.
\end{equation}

\begin{lemma}[Diagonal operator norm identity]\label{lem:diag}
If $D$ is diagonal with entries in $[0,1]$ and $\eta>0$, then
$\big\|I-\eta D\big\|_2=\max_j|1-\eta D_{jj}|$.
\end{lemma}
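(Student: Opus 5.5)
The matrix $I-\eta D$ is diagonal, so its spectral norm equals the largest absolute value of its entries. The plan is to state this reduction and verify it directly.

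First, since $D$ is diagonal, $I-\eta D$ is diagonal with entries $d_j:=1-\eta D_{jj}$, and these are real. For any $\mathbf{x}\in\mathbb{R}^H$ we have
\[
\|(I-\eta D)\mathbf{x}\|_2^2=\sum_j d_j^2x_j^2\le \bigl(\max_j d_j^2\bigr)\|\mathbf{x}\|_2^2 .
\]
This gives the upper bound $\|I-\eta D\|_2\le\max_j|d_j|$.

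Second, for the matching lower bound, take $\mathbf{x}=\mathbf{e}_{j^\star}$, the standard basis vector at an index $j^\star$ attaining $\max_j|d_j|$. Then $\|(I-\eta D)\mathbf{e}_{j^\star}\|_2=|d_{j^\star}|$, so equality holds. Equivalently, one can note that $I-\eta D$ is symmetric, so its spectral norm is its spectral radius, and its eigenvalues are exactly the $d_j$.

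Neither step is a real obstacle. The only point worth recording is that the hypotheses $D_{jj}\in[0,1]$ and $\eta>0$ are not needed for the identity itself; it holds for any real diagonal $D$. These hypotheses become relevant only downstream. They give $|1-\eta D_{jj}|\le 1$ whenever $\eta\le 2$, since then $\eta D_{jj}\in[0,2]$ and hence $1-\eta D_{jj}\in[-1,1]$. That bound is what feeds into $\rho^{(k)}\le 1-\alpha^{(k)}$ in Theorem~\ref{thm:error_reduction} via the recursion \eqref{eq:residual_recursion}. I would add this observation as a remark after the identity.
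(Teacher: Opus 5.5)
Your proof is correct and follows the same route as the paper, which simply observes that $I-\eta D$ is diagonal, so its spectral norm is the largest absolute diagonal entry. Your explicit upper bound and basis-vector lower bound just spell out that one-line argument. Your remark is also accurate: the identity holds for any real diagonal $D$, and the hypotheses $D_{jj}\in[0,1]$, $\eta\in(0,2]$ matter only downstream, where they give $\rho^{(k)}\le 1-\alpha^{(k)}$.
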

\begin{proof}
$I-\eta D$ is diagonal; its spectral norm equals the largest absolute diagonal entry.
\end{proof}

\subsection{Proof of Theorem~\ref{thm:error_reduction}}
\label{appB:thm_proof}
Define
\[
M^{(k)}=(1-\alpha^{(k)})(I-\eta^{(k)}\boldsymbol{\Omega}^{(k-1)}),\quad
N^{(k)}=-(1-\alpha^{(k)})\eta^{(k)}\boldsymbol{\Omega}^{(k-1)},\quad
v^{(k)}=\alpha^{(k)}\mathbf{u}^{(k)}.
\]
Then \eqref{eq:residual_recursion} can be written as
$\mathbf{R}^{(k)}=M^{(k)}\mathbf{R}^{(k-1)}+N^{(k)}\mathbf{e}^{(k-1)}+v^{(k)}$.
In the following, expectations are taken over the residual-prediction error $\mathbf{e}^{(k-1)}$,
conditional on the fixed context and the resulting adaptive weights at the forecast origin;
hence $M^{(k)},N^{(k)},v^{(k)}$ are treated as fixed inside $\mathbb{E}[\cdot]$.

By Lemma~\ref{lem:diag},
\[
\|M^{(k)}\|_2
=(1-\alpha^{(k)})\big\|I-\eta^{(k)}\boldsymbol{\Omega}^{(k-1)}\big\|_2
=(1-\alpha^{(k)})\max_j|1-\eta^{(k)}\boldsymbol{\omega}^{(k-1)}_j|
=: \rho^{(k)}.
\]
With $\alpha^{(k)}\in(0,1)$, $\eta^{(k)}\in(0,2]$, and $\boldsymbol{\omega}^{(k-1)}_j\in[0,1]$, we have
$|1-\eta^{(k)}\boldsymbol{\omega}^{(k-1)}_j|\le 1$, hence $\rho^{(k)}\le 1-\alpha^{(k)}<1$.

Next, use the two-parameter Young inequality twice: for any $\delta_1,\delta_2>0$ and vectors $u,v,w$,
\[
\|u+v+w\|_2^2 \le (1+\delta_1)\|u\|_2^2 + (1+\tfrac{1}{\delta_1})(1+\delta_2)\|v\|_2^2
+ (1+\tfrac{1}{\delta_1})(1+\tfrac{1}{\delta_2})\|w\|_2^2.
\]
Apply this to $u=M^{(k)}\mathbf{R}^{(k-1)}$, $v=N^{(k)}\mathbf{e}^{(k-1)}$, $w=v^{(k)}$, take expectations, and bound norms:
\begin{align*}
\mathbb{E}\|\mathbf{R}^{(k)}\|_2^2
&\le (1+\delta_1)\|M^{(k)}\|_2^2\,\mathbb{E}\|\mathbf{R}^{(k-1)}\|_2^2
+ (1+\tfrac{1}{\delta_1})(1+\delta_2)\|N^{(k)}\|_2^2\,\mathbb{E}\|\mathbf{e}^{(k-1)}\|_2^2 \\
&\quad + (1+\tfrac{1}{\delta_1})(1+\tfrac{1}{\delta_2})\|v^{(k)}\|_2^2 .
\end{align*}

\paragraph{Explicit, level-adaptive constants.}
Choose $\boxed{\delta_1=\alpha^{(k)},\ \delta_2=1}$. Then
\[
(1+\delta_1)\|M^{(k)}\|_2^2
= (1+\alpha^{(k)})(\rho^{(k)})^2
\ \le\ (1+\alpha^{(k)})(1-\alpha^{(k)})^2
= 1-\alpha^{(k)}-(\alpha^{(k)})^2+(\alpha^{(k)})^3
<1,
\]
so the leading factor remains a strict contraction.
Moreover,
\[
\|N^{(k)}\|_2 \le (1-\alpha^{(k)})\eta^{(k)}\|\boldsymbol{\Omega}^{(k-1)}\|_2 \le (1-\alpha^{(k)})\eta^{(k)},\qquad
\|v^{(k)}\|_2 = \alpha^{(k)}\|\mathbf{u}^{(k)}\|_2 .
\]
Hence we obtain
\begin{equation}
\label{eq:one_step_final}
\mathbb{E}\|\mathbf{R}^{(k)}\|_2^2
\;\le\; \underbrace{\big[(1+\alpha^{(k)})(\rho^{(k)})^2\big]}_{<\,1}\ \mathbb{E}\|\mathbf{R}^{(k-1)}\|_2^2
\;+\; B_k,
\end{equation}
with the explicit bound
\begin{equation}
\label{eq:Bk_explicit}
B_k \;\le\; 2\Big(1+\tfrac{1}{\alpha^{(k)}}\Big)\,(1-\alpha^{(k)})^2(\eta^{(k)})^2\,\sigma_e^2
\;+\; 2\Big(1+\tfrac{1}{\alpha^{(k)}}\Big)\,(\alpha^{(k)})^2\,\mathbb{E}\|\mathbf{u}^{(k)}\|_2^2,
\end{equation}
where $\sigma_e^2$ upper-bounds the \emph{conditional} second moment of $\mathbf{e}^{(k-1)}$ (estimated on validation),
i.e., $\sigma_e^2 \ge \mathbb{E}\|\mathbf{e}^{(k-1)}\|_2^2$ under the above conditioning.
This proves Theorem~\ref{thm:error_reduction} with fully spelled-out constants.

\paragraph{Modeling note (Ridge $\Rightarrow$ bounded noise).}
In code, $\mathbf{e}^{(k-1)}$ is precisely the prediction error of the (deterministic) Ridge regressor $g^{(k-1)}_\phi$.
Treating it as zero-mean with bounded second moment is a standard simplifying assumption for contraction-style analyses;
the variance proxy $\sigma_e^2$ is estimated on validation.

\subsection{Proof of Corollary~\ref{cor:finest}}
\begin{proof}
We proceed in three steps.

\paragraph{Step 1: Invoke the per-level bound at level $K$.}
By the update rule and the non-expansive step-size condition in Section~\ref{sec:theory}, the level-$K$
prediction satisfies the per-level $\ell_\infty$ bound
\begin{equation}
\label{eq:per-level-K}
\begin{aligned}
\|\mathbf{y} - \widehat{\mathbf{y}}^{(K)}\|_\infty
&\le (1-\alpha^{(K)})\,
\underbrace{\max_{j\in[H]}\big|1-\eta^{(K)}\,\boldsymbol{\omega}^{(K-1)}_{j}\big|}_{\text{non-expansive factor}}
\ \|\mathbf{y} - \widehat{\mathbf{y}}^{(K-1)}\|_\infty\\
&\quad +\ \alpha^{(K)}\, \bar B_K
\ +\ (1-\alpha^{(K)})\,\eta^{(K)}\,
\|\boldsymbol{\omega}^{(K-1)}\|_\infty\ E_{K-1}.
\end{aligned}
\end{equation}
This is exactly the $k{=}K$ instance of the general per-level inequality in Section~\ref{sec:theory}, obtained by:
(i) subtracting $\mathbf{y}$ from the level-$K$ update,
(ii) decomposing the residual prediction as $\widehat{\mathbf{R}}^{(K-1)}
=\mathbf{R}^{(K-1)}+\mathbf{e}^{(K-1)}$ with $\|\mathbf{e}^{(K-1)}\|_\infty\le E_{K-1}$,
(iii) applying the operator norm identity
$\|I-\eta^{(K)}\mathrm{Diag}(\boldsymbol{\omega}^{(K-1)})\|_{\infty\to\infty}
= \max_{j}|1-\eta^{(K)} \boldsymbol{\omega}^{(K-1)}_{j}|$,
and (iv) using $\|\mathbf{a}\odot\mathbf{b}\|_\infty \le \|\mathbf{a}\|_\infty\|\mathbf{b}\|_\infty$
together with the backbone error bound $\|\mathbf{y}-\bar{\mathbf{y}}^{(K)}\|_\infty\le \bar B_K$.

\paragraph{Step 2: Upper bound for the direct finest-level baseline.}
By the definition of $\bar B_K$ (the $\ell_\infty$ upper bound on the level-$K$ backbone error),
the direct one-pass baseline obeys
\begin{equation}
\label{eq:direct-BK}
\|\mathbf{y}-\widehat{\mathbf{y}}^{\mathrm{base}}\|_\infty
=\|\mathbf{y}-\bar{\mathbf{y}}^{(K)}\|_\infty \ \le\ \bar B_K.
\end{equation}

\paragraph{Step 3: Sufficient condition for the refined envelope.}
Starting from \eqref{eq:per-level-K}, it is sufficient that the remaining two terms are together no larger than $\bar B_K$, i.e.,
\begin{equation}
\label{eq:sufficient-cond}
\max_{j\in[H]}\big|1-\eta^{(K)}\,\boldsymbol{\omega}^{(K-1)}_{j}\big|\,
\|\mathbf{y}-\widehat{\mathbf{y}}^{(K-1)}\|_\infty
\;+\;\eta^{(K)}\|\boldsymbol{\omega}^{(K-1)}\|_\infty\,E_{K-1}
\;\le\; \bar B_K.
\end{equation}
Under \eqref{eq:sufficient-cond}, inequality \eqref{eq:per-level-K} yields
\[
\|\mathbf{y} - \widehat{\mathbf{y}}^{(K)}\|_\infty
\ \le\
\alpha^{(K)} \bar B_K + \bigl(\bar B_K - \alpha^{(K)} \bar B_K\bigr)
\ =\ \bar B_K,
\]
which, together with \eqref{eq:direct-BK}, gives the envelope claim of
Corollary~\ref{cor:finest}.
\end{proof}

\subsection{Complexity and bound on $\mathcal{P}_r$}
\label{appB:complexity}
For $\mathcal{P}_r=\mathcal{U}_r\circ\mathcal{D}_r$ (block average + repeat), each level costs $\mathcal{O}(W)$ per window; Ridge inference is negligible compared to a single $f_\theta$ call. The total cost per forecast origin is $\mathcal{O}(|\mathcal{R}|\cdot C_\theta + |\mathcal{R}|\cdot W + (|\mathcal{R}|{-}1)C_{\text{ridge}})$, where $C_\theta$ is the cost of one call to $f_\theta$ and $C_{\text{ridge}}$ is the lightweight residual-prediction cost. The operator $\mathcal{P}_r$ is bounded because block averaging followed by repeat is an idempotent non-expansive projection, so $\|\mathcal{P}_r\|_2\le1$ over the fixed resolution scale.

\bigskip

\section{Datasets, Regions, Covariates, and Splits}
\addcontentsline{toc}{section}{Appendix C\quad Datasets, Regions, Covariates, and Splits}
\label{app:datasets}

\paragraph{Regions and time span.}
We evaluate regional SPEI forecasting at three representative sites in South Australia. Table~\ref{tab:regions} lists coordinates, nominal climate zones, coverage, and sample counts. Coordinates refer to the center of the grid cell used to extract both covariates and the target series. The series is sampled at monthly temporal resolution from 1982/01 to 2018/12, yielding $\mathbf{444}$ months per site.

\begin{table}[!htbp]
\centering
\caption{Regions, climate zones, coverage, and sample counts at monthly resolution.}
\label{tab:regions}
\begin{tabular}{lcccc}
\toprule
Region ID & Coordinates (lat, lon) & Climate zone & Years covered & Total samples \\
\midrule
SA-1 & $(-26.125,\ 129.125)$ & Arid            & 1982 to 2018 & 444 \\
SA-2 & $(-29.125,\ 134.875)$ & Arid and seasonal   & 1982 to 2018 & 444 \\
SA-3 & $(-35.625,\ 138.875)$ & Temperate       & 1982 to 2018 & 444 \\
\bottomrule
\end{tabular}
\end{table}

\paragraph{Target and data sources.}
The forecasting target is \textbf{SPEI-30}, the Standardized Precipitation--Evapotranspiration Index computed using a 30-day accumulation window \citep{vicente2010standardized, Liu2024}. In this study, SPEI-30 is obtained from the global daily SPEI dataset (SPEI-GD) \citep{Liu2024}, which provides daily SPEI at $0.25^\circ$ spatial resolution for multiple accumulation timescales, including 5, 30, 90, 180, and 360 days. To align the daily SPEI-GD target with the monthly forecasting setting, we retain the last available daily SPEI-30 value in each calendar month; each monthly time stamp therefore represents the SPEI value accumulated over the preceding 30 days up to the end of that month. Positive values indicate wetter than normal conditions and negative values indicate drier than normal conditions. Target values are extracted at the study cell centers in Table~\ref{tab:regions}. Meteorological covariates are taken from the \emph{NCEP--NCAR Reanalysis~1} \citep{kalnay1996ncep} as monthly fields on their native grids. These grids are variable-dependent: some variables are provided on regular $2.5^\circ \times 2.5^\circ$ latitude--longitude grids, while others are provided on T62 Gaussian grids with an approximate spatial resolution of $1.875^\circ$. For each study location, NCEP--NCAR variables are sampled from the nearest native grid cell or bilinearly interpolated to the site coordinates as needed. Large-scale climate indices are obtained from public operational releases, including the Ni\~no~1+2, Ni\~no~3.4, and Ni\~no~4 indices for ENSO \citep{trenberth1997enso}, the Dipole Mode Index (DMI) for the Indian Ocean Dipole \citep{saji1999iod}, and the Southern Annular Mode (SAM) index \citep{marshall2003sam}. All sources are publicly available. The SPEI-GD product is accessible at \url{https://doi.org/10.5281/zenodo.8060268}. NCEP--NCAR Reanalysis~1 documentation and downloads are available from NOAA PSL at \url{https://www.psl.noaa.gov/data/gridded/data.ncep.reanalysis.html}. Public climate indices are available from operational centers such as NOAA CPC.

\paragraph{Rolling origin evaluation and splits.}
We use a rolling-origin evaluation with non-overlapping test origins. For each site, the 444 monthly observations from 1982/01 to 2018/12 are split chronologically into training, validation, and test periods with a ratio of $70\%/10\%/20\%$, using split boundaries at the 70\% and 80\% points of each chronological series. Candidate residual predictors are fitted on the training split and selected by time-ordered validation; after $\lambda^{(k)}$ and $\delta^{(k)}$ are fixed, the final residual predictors are refit on the union of the training and validation splits and then frozen for test-time inference. The final test period is held out for reporting. The step between consecutive test origins equals the forecast horizon. The main text focuses on $H{=}1$, and Appendix~G additionally reports $H\in\{1,3,6\}$ at Location~1. Per-channel standardization is fit on the training span only and applied unchanged to validation and test. No information from test targets is used in preprocessing, covariate construction, fitting, or selection. The resolution scale and all hyperparameters are held fixed across regions and splits with no retuning.

\paragraph{Preprocessing and quality control.}
Within channel missing values are linearly interpolated. Terminal gaps of at most two months are back filled or forward filled. After standardization, values beyond five median absolute deviations are clipped to the nearest bound. Calendar alignment ensures that the last index in each input window coincides with the forecast origin month.

\paragraph{Evaluation protocol.}
For each region and each horizon, we summarize the rolling-origin forecasts using MSE, MAE, and $R^2$ over the corresponding test span. The foundation baseline, the resolution scale, weighting rule, and step-size setting are identical to those used in the main text.

\paragraph{Data availability and licensing.}
All datasets used in this study are publicly available from the sources listed under ``Target and data sources'' above: SPEI-GD via Zenodo, NCEP--NCAR Reanalysis~1 via NOAA PSL, and the large-scale climate indices from operational centers. All series are obtained from public endpoints and do not require bespoke access.

\paragraph{Computing environment and runtime.}
Experiments are run on a single workstation equipped with an NVIDIA GeForce RTX 3090 graphics processor with 24 GB of memory, an Intel 12700K central processor, 64 GB of DDR4 3200 memory, and a 2 TB NVMe solid state drive. The operating system is Ubuntu 20.04.3 LTS. The software stack uses Python 3.10 with PyTorch 2.0.1 with CUDA 11.8, NumPy 1.24.3, Pandas 2.0.2, and Scikit learn 1.2.2. Random seeds are fixed for NumPy and PyTorch so that runs are deterministic where supported. The code repository includes configuration files and single command scripts that reproduce all scores and figures reported in the paper without retuning.

\bigskip

\section{Implementation Details and Hyperparameters}
\addcontentsline{toc}{section}{Appendix D\quad Implementation Details and Hyperparameters}
\label{app:impl}

\vspace{2pt}
\noindent\textbf{Conventions (global).}
All normalization statistics and residual predictors follow the leakage-safe protocol of
App.~\ref{app:datasets}: statistics are computed on the training split only, hyperparameters
(e.g., $\delta^{(k)}$ and $\lambda^{(k)}$) are selected on validation, and the predictors are then
refit on training+validation and frozen before test evaluation.

\subsection{Projection operator, windowing, and boundary handling (main)}
\label{appD:projection_main}
Unless marked otherwise, all experiments use the simple projection
$\mathcal{P}_r=\mathcal{U}_r\circ\mathcal{D}_r$ exactly as defined in App.~\ref{appA:projection}
(block averaging, repeat upsampling, partial-block handling without future padding, and shared
cross-channel timing), so $\mathcal{P}_r(\mathbf{X}_{t-W+1:t})$ always has length $W$.

\noindent\emph{Context windowing.}
At inference, the long context length is set to $W=\lfloor 0.7\,T\rfloor$ (where $T$ is the available series
length at prediction time) so that the base TSFM receives a stable long-range context; the short window
for learning residuals is defined in the original temporal resolution (Sec.~\ref{sec:rgmr}), while the per-level stride $r_k$
only determines the effective number of downsampled observations, and is independent of~$W$.

\noindent\textit{Inference-time availability.}
Under the rolling-origin evaluation protocol, the $p_{\mathrm{lag}}$ residual lags in block~(iv) are computed only from past revealed ground truth: once $y_t$ becomes observed at a later origin, we update the residual-history buffer with $e_t^{\mathrm{hist}} = y_t - \widehat{y}^{(K)}_t$ for use by strictly later forecast origins (distinct from the analysis-only inference residual $\mathbf{R}^{(k)}$ in Sec.~\ref{sec:theory}), so no target inside the current horizon is consumed by the current prediction.

The Ridge penalty $\lambda^{(k)}$ is selected by time-ordered validation on the log-grid
$\{10^{-4},10^{-3},\ldots,10^{2}\}$.

\subsection{Residual-adaptive weighting and step size}
\label{appD:weighting}

\paragraph{Residual-adaptive weighting.}
At level $k$, elementwise weights are
\[
\boldsymbol{\omega}^{(k)} \;=\; \mathrm{clip}_{[\varepsilon_{\min},1]}\!\Big(\sigma\!\big(\gamma\big(|\widehat{\mathbf{R}}^{(k)}|-\delta^{(k)}\big)\big)\Big), \qquad \sigma(x)=\tfrac{1}{1+e^{-x}},
\]
where the logistic sharpness $\gamma{=}3.0$ is fixed (as in Sec.~\ref{sec:rgmr}), 
$\delta^{(k)}$ is a \emph{validation-selected} quantile of the absolute residual magnitude at level $k$
(quantile grid $\{0.60,0.70,0.75,0.80,0.85,0.90\}$ unless stated), and $\varepsilon_{\min}>0$
prevents zero weights.

\paragraph{Step size.}
All refinement levels $k=2,\dots,K$ are executed, matching Algorithm~\ref{alg:rgmr_inference}.
We use the fixed step size $\eta^{(k)}=1.0$ in the reported experiments; the analysis permits any
$\eta^{(k)}\in(0,2]$ under the assumptions stated in Theorem~\ref{thm:error_reduction}. This
setting is not treated as an independent ablation.

\subsection{Default hyperparameters and ranges}
\label{appD:hyperparams}
\begin{table}[H]
\centering
\caption{Default hyperparameters and ranges (used unless stated otherwise).}
\label{tab:hparams}
\begin{tabular}{lcc}
\toprule
Parameter & Default & Range/Selection \\
\midrule
Resolution scale $\mathcal{R}$ & $\{12,6,3,2,1\}$ & fixed \\
\textbf{Logistic sharpness (fixed)} & \textbf{3.0} & \textbf{fixed constant (matches main text)} \\
\textbf{Weight threshold $\delta^{(k)}$} & \textbf{val-quantile} & \textbf{grid $\{0.60,\dots,0.90\}$ (per level $k$)} \\
Weight floor $\varepsilon_{\min}$ & $10^{-3}$ & fixed \\
Step size $\eta^{(k)}$ & 1.0 & fixed in reported experiments; theory allows $(0,\,2]$ \\
Mixing $\alpha^{(k)}$ & formula below & implied in $[0.3,\,0.8]$ \\
Residual lags $p_{\mathrm{lag}}$ & 6 & $\{3,\,6\}$ \\
Trend window $q$ & 12 & $\{6,\,12\}$ \\
Ridge penalty $\lambda^{(k)}$ & CV-selected & $\{10^{-4},10^{-3},\ldots,10^{2}\}$ (time-ordered CV) \\
\bottomrule
\end{tabular}
\end{table}

\subsection{Mixing coefficient (monotone coarse$\to$fine)}
\label{appD:mixing}
The mixing coefficient increases with finer resolutions and satisfies $0<\alpha^{(k)}<1$:
\[
\alpha^{(k)} \;=\; 0.3 \;+\; 0.5\!\left(1-\frac{r_k}{\max(\mathcal{R})}\right),
\]
where $r_k$ is the current stride and $\max(\mathcal{R})=12$ for $\mathcal{R}=\{12,6,3,2,1\}$,
yielding $\alpha^{(k)}\in[0.3,0.8]$. This choice aligns with the theory by placing more trust
on finer-resolution proposals while retaining contraction.

\subsection{Remarks on hyperparameter economy}
\label{appD:hp_economy}
RGMR uses a small set of hyperparameters with default values (Tab.~\ref{tab:hparams}). 
When selection is required (e.g., $\delta^{(k)}$ quantiles, $\lambda^{(k)}$), we employ
a \emph{single} time-ordered validation pass. This keeps deployment overhead low
while preserving the ``plug-and-play'' nature of the wrapper.

\bigskip

\section{Why South Australia and Societal Impact}
\addcontentsline{toc}{section}{Appendix E\quad Why South Australia and Societal Impact}
\label{app:broader_validation}

\paragraph{Rationale for choosing South Australia (SA).}
South Australia offers a stringent and policy relevant testbed for inference time adaptation because it concentrates diverse hydroclimate regimes within a compact domain. Coastal areas experience Mediterranean like seasonality and strong oceanic modulation while inland basins transition rapidly to semiarid and arid conditions, and this coastal to interior gradient, interacting with subseasonal bursts and multiyear swings driven by large scale modes such as ENSO, IOD, and SAM, produces multiscale variability that routinely exposes where single resolution forecasters underreact or overreact. For SPEI this appears as a standardized series with substantial mass near normal conditions and heavy tailed drought or wet anomalies, precisely the setting where our residual guided, multiresolution refinement can correct coarse, temporally smeared signals without retraining the underlying foundation model. SA is also covered consistently by public gridded reanalysis at monthly resolution, so the same frozen base model and resolution scale can be evaluated across many locations under one rolling origin protocol without bespoke data access or regional retuning.

\paragraph{Societal impact.}
Sharper regional drought outlooks translate into concrete operational decisions: earlier and more reliable dry spell signals support water allocation, reservoir operations, and demand management; month ahead guidance for moisture deficits and heat stress improves agricultural scheduling and irrigation timing; and longer effective lead times help risk agencies and power systems preposition for bushfire seasons and demand peaks. These benefits matter most for remote and regional communities in arid zones that are disproportionately exposed to climate variability, where improving foresight at the horizons where operational choices are actually made enables targeted and equitable adaptation rather than reactive crisis management.

\paragraph{Responsible use and limitations.}
RGMR operates strictly at inference time on frozen foundation models. It cannot remove upstream data biases, coverage gaps, or structural errors in the base forecaster, and it should not be used as the sole basis for high stakes decisions. Best practice is to treat RGMR as one member in a multimodel and multievidence framework that includes expert judgment, observational context, and sector specific constraints. Reported improvements reflect the fixed resolution scale, weighting rule, and step-size settings specified in the main text. Changing those design choices or the evaluation geography may alter performance and uncertainty. To support scrutiny, extension, and safe reuse, we release the code with exact configurations and single command scripts mirroring the paper’s rolling origin evaluation, so stakeholders can replicate results and reassess them in their own regional contexts before operational uptake.

\bigskip

\section{More Experimental Plots}
\addcontentsline{toc}{section}{Appendix F\quad More Experimental Plots}
\label{app:more_results}

\paragraph{Setup.}
We report additional one–month–ahead results at Location $(-26.125,\,129.125)$ using the same leakage–safe protocol as the main paper (identical train/val/test splits, per–channel standardization fit on train, and $H{=}1$ unless stated). Figure~\ref{fig:baselines_ts_all} shows predicted vs.\ observed SPEI prediction plots for twelve baselines.

\paragraph{Some findings.}
(1) Foundation-scale models (e.g., TimesFM) tend to yield the tightest clouds, particularly near mild-to-moderate drought values. 
(2) Frequency/linear families (DLinear, FiLM) often reduce variance but can under-shoot extremes. 
(3) Patch- and pyramid-style Transformers (PatchTST, Pyraformer) better capture mesoscale variability yet may show scatter at the tails. 
(4) Nonstationarity-aware designs narrow bias under regime shifts but remain sensitive to rare events.

\paragraph{Baselines at a glance.}
\begin{itemize}[leftmargin=*]
\item \textbf{TimesFM} \citep{das2024decoderonlyfoundationmodeltimeseries}. \quad A decoder-only time series foundation model pretrained on large multi-domain corpora. It provides strong zero-shot and few-shot generalization for point forecasting with long context lengths, and it can be used as a frozen backbone with inference-time adaptation. Code: \href{https://github.com/google-research/timesfm}{google-research/timesfm}.

\item \textbf{Transformer (vanilla)} \citep{vaswani2017attention}. \quad A capacity reference built from standard multi-head attention, positional encoding, and feed-forward blocks. Complexity scales quadratically with sequence length and it serves as a clean yardstick across datasets. Code: \href{https://github.com/thuml/Time-Series-Library}{thuml/Time-Series-Library} (model \texttt{Transformer.py} and unified training scripts).

\item \textbf{Autoformer} \citep{wu2021autoformer}. \quad A decomposition-oriented Transformer that models trend and seasonality explicitly and replaces dot-product attention with series-wise autocorrelation to stabilize long-horizon prediction. Code: \href{https://github.com/thuml/Autoformer}{thuml/Autoformer}.

\item \textbf{Crossformer} \citep{zhang2023crossformer}. \quad A cross-dimension attention architecture for multivariate series. It introduces hierarchical embeddings and patching to capture inter-variable relations together with temporal dependencies at reduced cost. Code: \href{https://github.com/Thinklab-SJTU/Crossformer}{Thinklab-SJTU/Crossformer}.

\item \textbf{DLinear} \citep{zeng2023dlinear}. \quad A minimal linear baseline that first separates trend and seasonal components by moving average then applies two one-layer linear heads and sums the results. Despite its simplicity it is competitive on long-horizon settings. Code: \href{https://github.com/cure-lab/LTSF-Linear}{cure-lab/LTSF-Linear}.

\item \textbf{PatchTST} \citep{nie2023patchtst}. \quad A patch-based Transformer that segments a long series into temporal patches as tokens and adopts channel-independent blocks so that each univariate channel shares weights. This design improves stability and efficiency for long horizons. Code: \href{https://github.com/PatchTST/PatchTST}{PatchTST/PatchTST}.

\item \textbf{TimeMixer} \citep{wang2024timemixer}. \quad An MLP-style time-mixing model with multi-scale mixers that fuse short and long temporal patterns. It targets strong accuracy with training and inference efficiency and serves as a non-attention deep baseline. Code: \href{https://github.com/kwuking/TimeMixer}{kwuking/TimeMixer}.

\item \textbf{iTransformer} \citep{liu2024itransformer}. \quad An inverted formulation that treats time steps as tokens and embeds along the variable axis. This swap emphasizes temporal relations across long histories while keeping standard Transformer blocks. Code: \href{https://github.com/thuml/iTransformer}{thuml/iTransformer}.

\item \textbf{Pyraformer} \citep{liu2022pyraformer}. \quad A pyramidal and dilated attention design that captures long-range context with sparse patterns and reduced attention cost. It is suited to very long contexts under limited compute. Code: \href{https://github.com/ant-research/Pyraformer}{ant-research/Pyraformer}.

\item \textbf{FiLM} \citep{zhou2022film}. \quad The frequency-improved Legendre memory model that uses Legendre state-space projection with frequency-domain enhancement and low-rank parameterization. It can act as a standalone predictor or a plug-in representation module. Code: \href{https://github.com/DAMO-DI-ML/NeurIPS2022-FiLM}{DAMO-DI-ML/NeurIPS2022-FiLM}.

\item \textbf{MICN} \citep{wang2023micn}. \quad A multi-scale model that decomposes series and applies inception-style convolutions to capture local details together with global context. It is a strong convolutional-style baseline for long-term forecasting. Code: \href{https://github.com/wanghq21/MICN}{wanghq21/MICN}.

\item \textbf{Nonstationary Transformer} \citep{liu2022nonstationary}. \quad A framework that introduces series stationarization and de-stationarized attention to mitigate distribution shifts through time. It can be combined with several attention backbones including the vanilla Transformer. Code: \href{https://github.com/thuml/Nonstationary_Transformers}{thuml/Nonstationary\_Transformers}.

\item \textbf{TimesNet} \citep{wu2023timesnet}. \quad A 2D-variation modeling framework that reshapes 1D series into 2D tensors via FFT-discovered periods, capturing both intra- and inter-period patterns through Inception blocks. Code: \href{https://github.com/thuml/Time-Series-Library}{thuml/Time-Series-Library}.

\item \textbf{TSMixer} \citep{chen2023tsmixer}. \quad An all-MLP architecture that mixes information along time and feature axes alternately, achieving strong forecasting accuracy with low parameter count and inference cost. Code: \href{https://github.com/google-research/google-research/tree/master/tsmixer}{google-research/tsmixer}.
\end{itemize}

\paragraph{Unified scripts.}
For reviewers who prefer a single entry point, the \href{https://github.com/thuml/Time-Series-Library}{THUML Time-Series-Library} \citep{thuml_tslib} provides consistent data loaders, training scripts, and reference configurations that cover most baselines listed above.

%

\begin{figure*}[t]
  \centering
  \begin{subfigure}[b]{0.24\textwidth}
    \includegraphics[width=\linewidth]{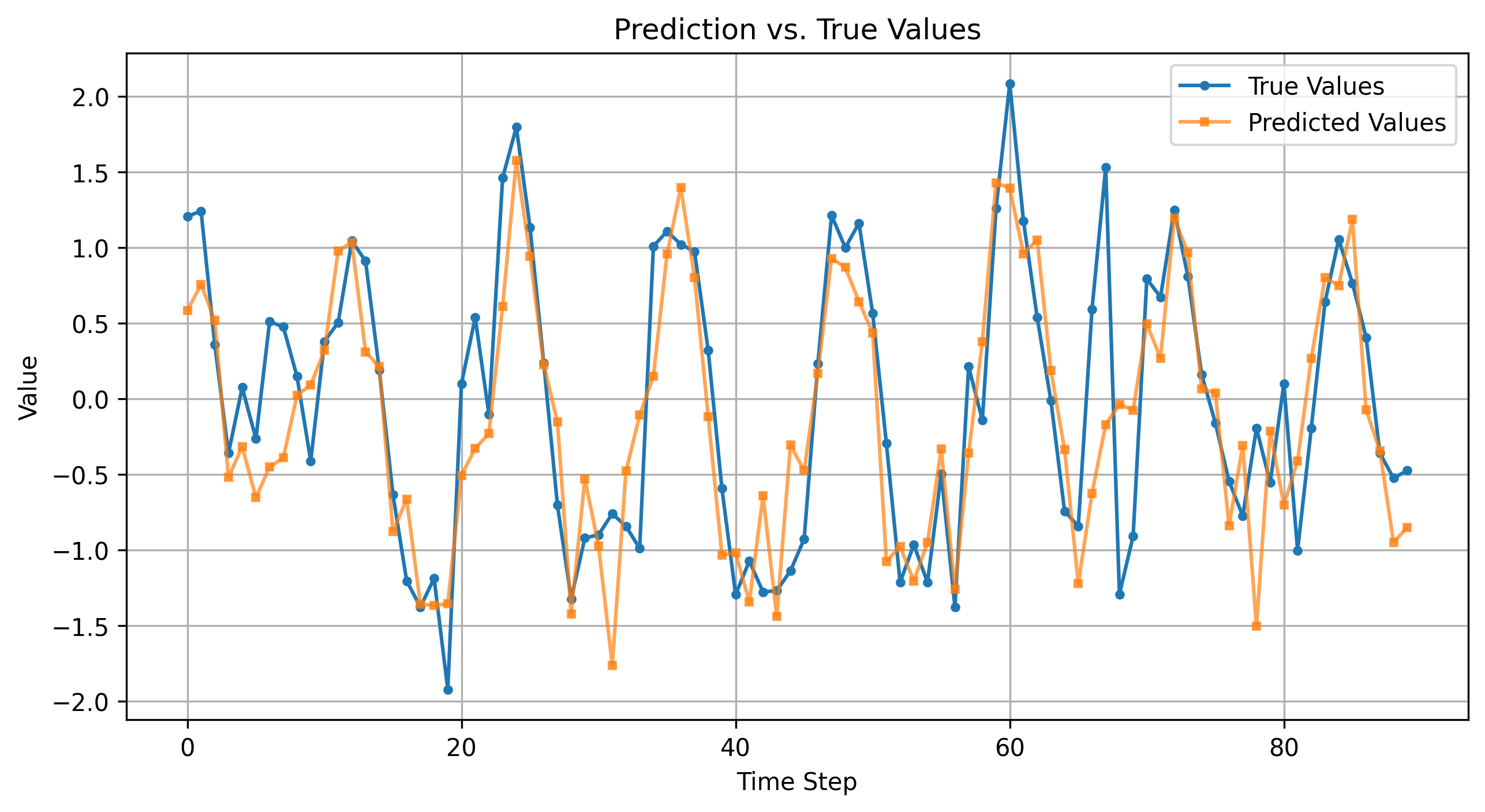}
    \caption{TimesFM}
  \end{subfigure}\hfill
  \begin{subfigure}[b]{0.24\textwidth}
    \includegraphics[width=\linewidth]{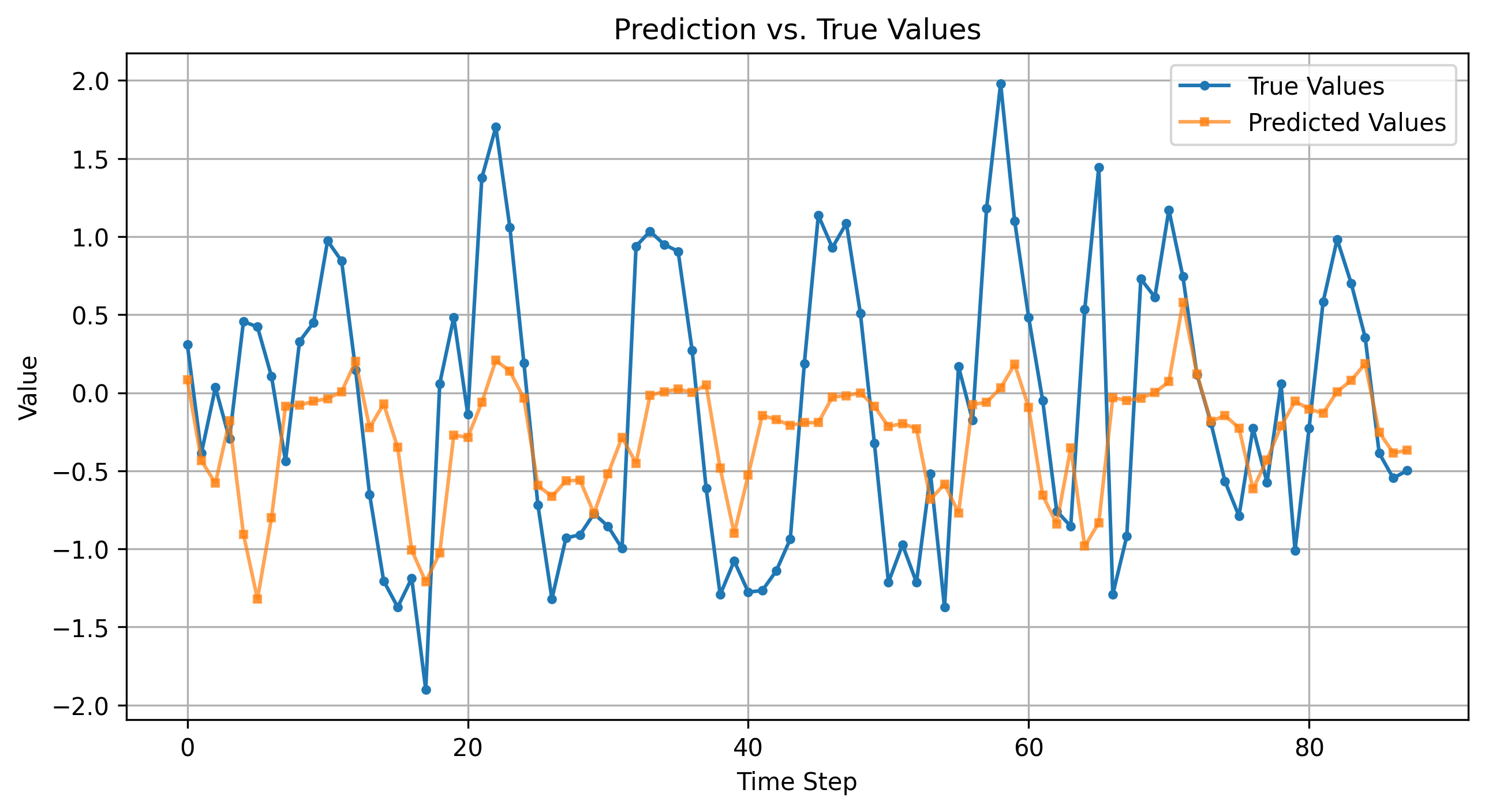}
    \caption{Transformer}
  \end{subfigure}\hfill
  \begin{subfigure}[b]{0.24\textwidth}
    \includegraphics[width=\linewidth]{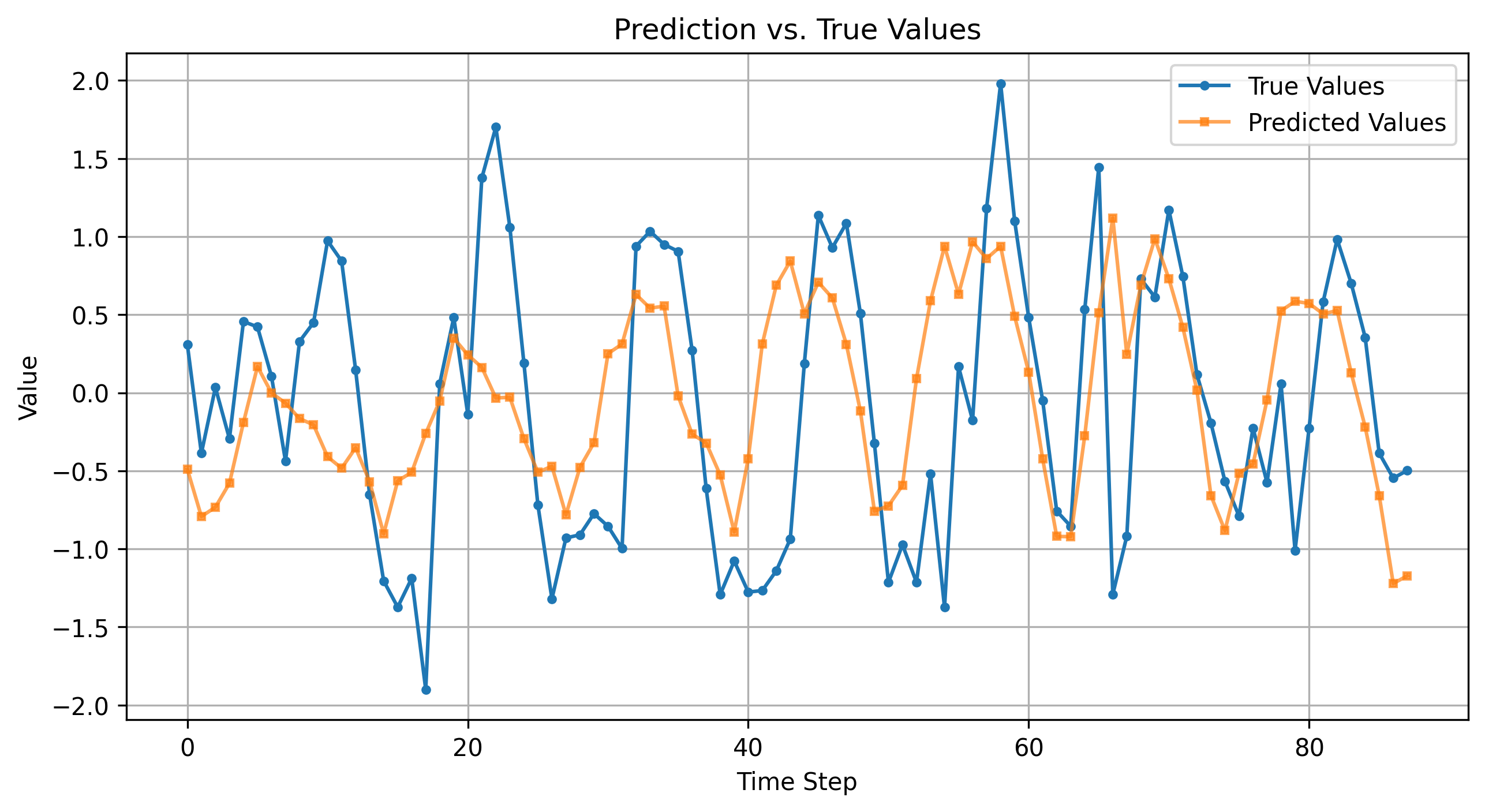}
    \caption{Autoformer}
  \end{subfigure}\hfill
  \begin{subfigure}[b]{0.24\textwidth}
    \includegraphics[width=\linewidth]{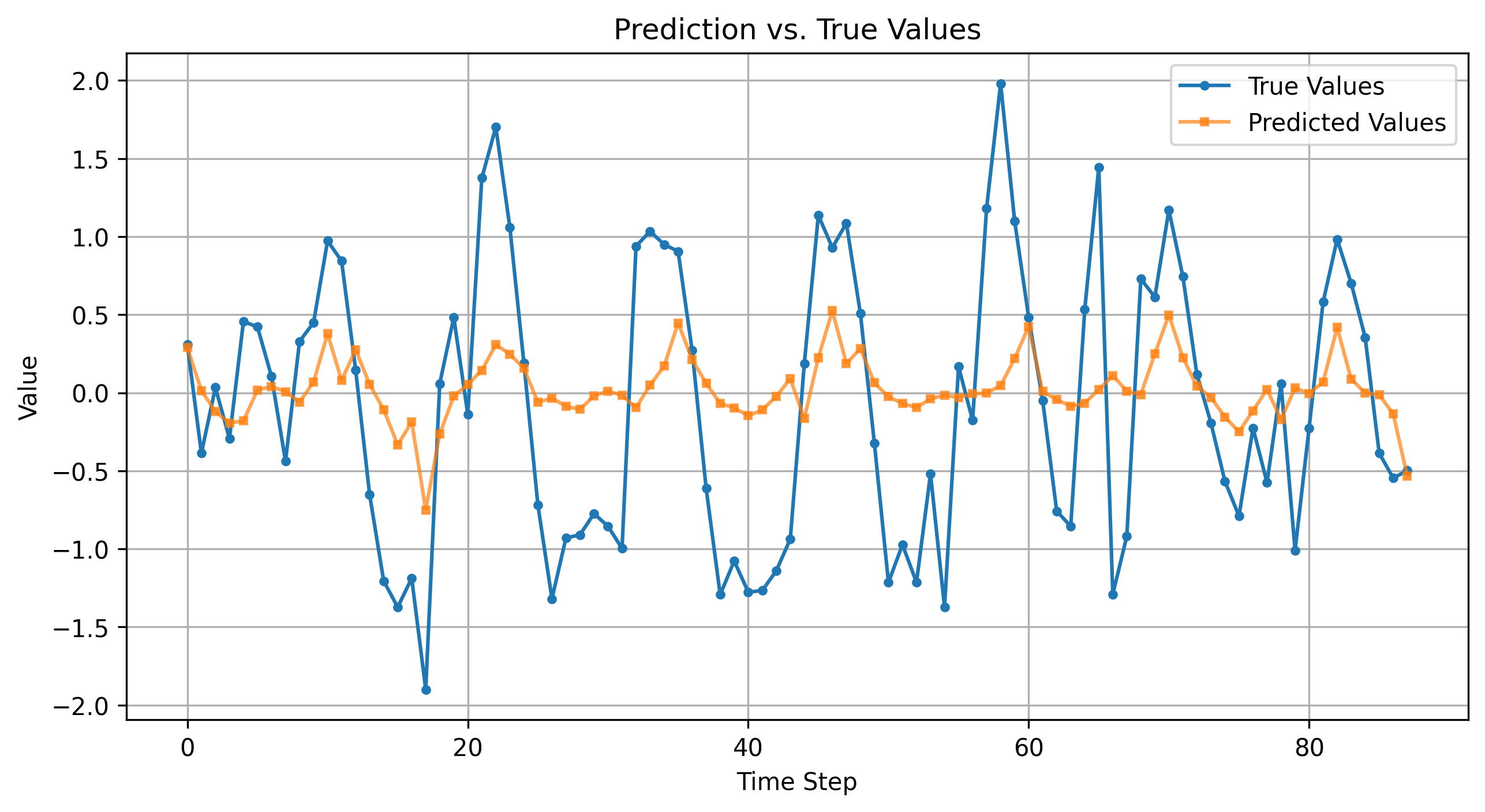}
    \caption{Crossformer}
  \end{subfigure}

  \vspace{4pt}
  \begin{subfigure}[b]{0.24\textwidth}
    \includegraphics[width=\linewidth]{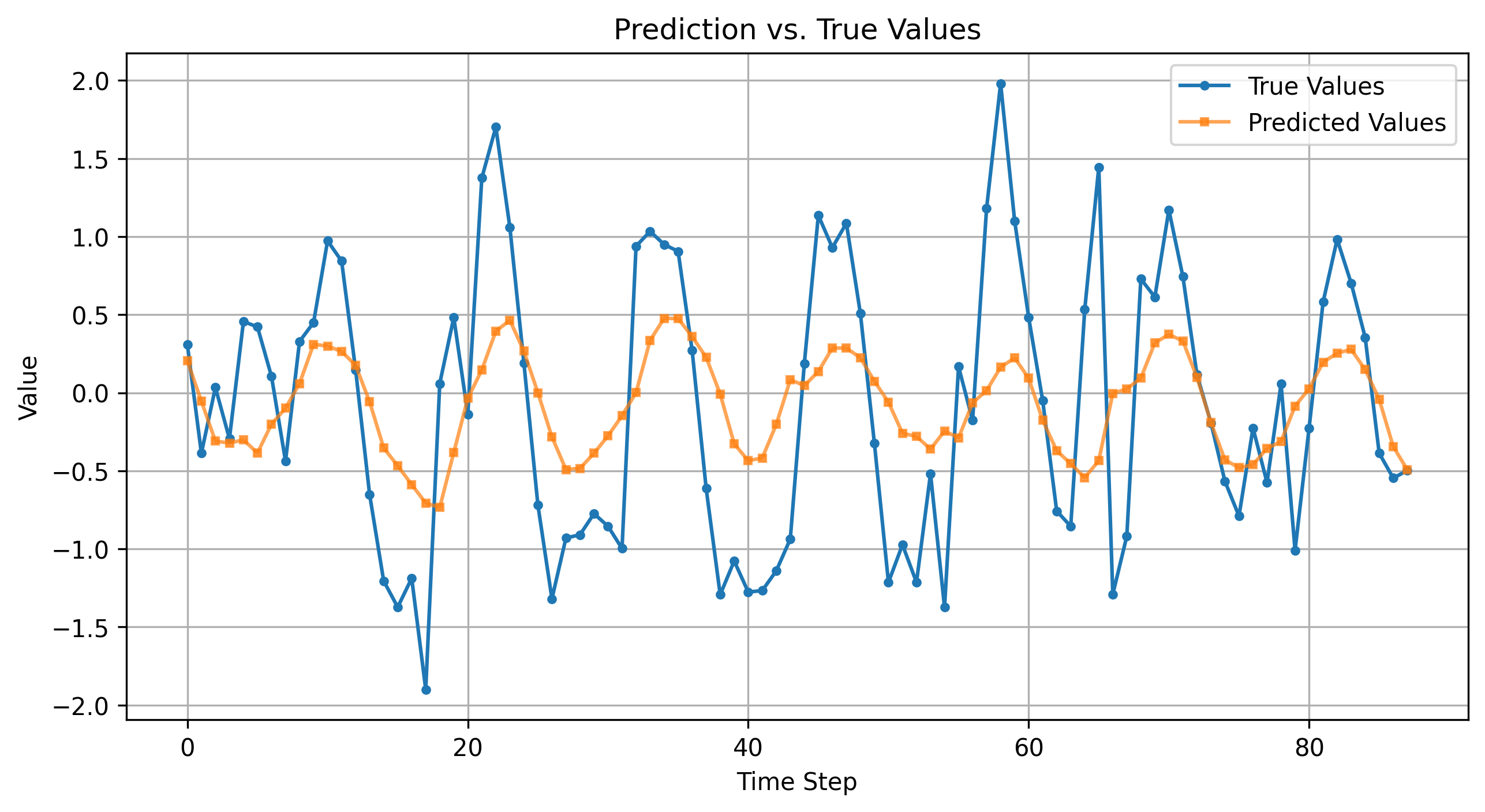}
    \caption{DLinear}
  \end{subfigure}\hfill
  \begin{subfigure}[b]{0.24\textwidth}
    \includegraphics[width=\linewidth]{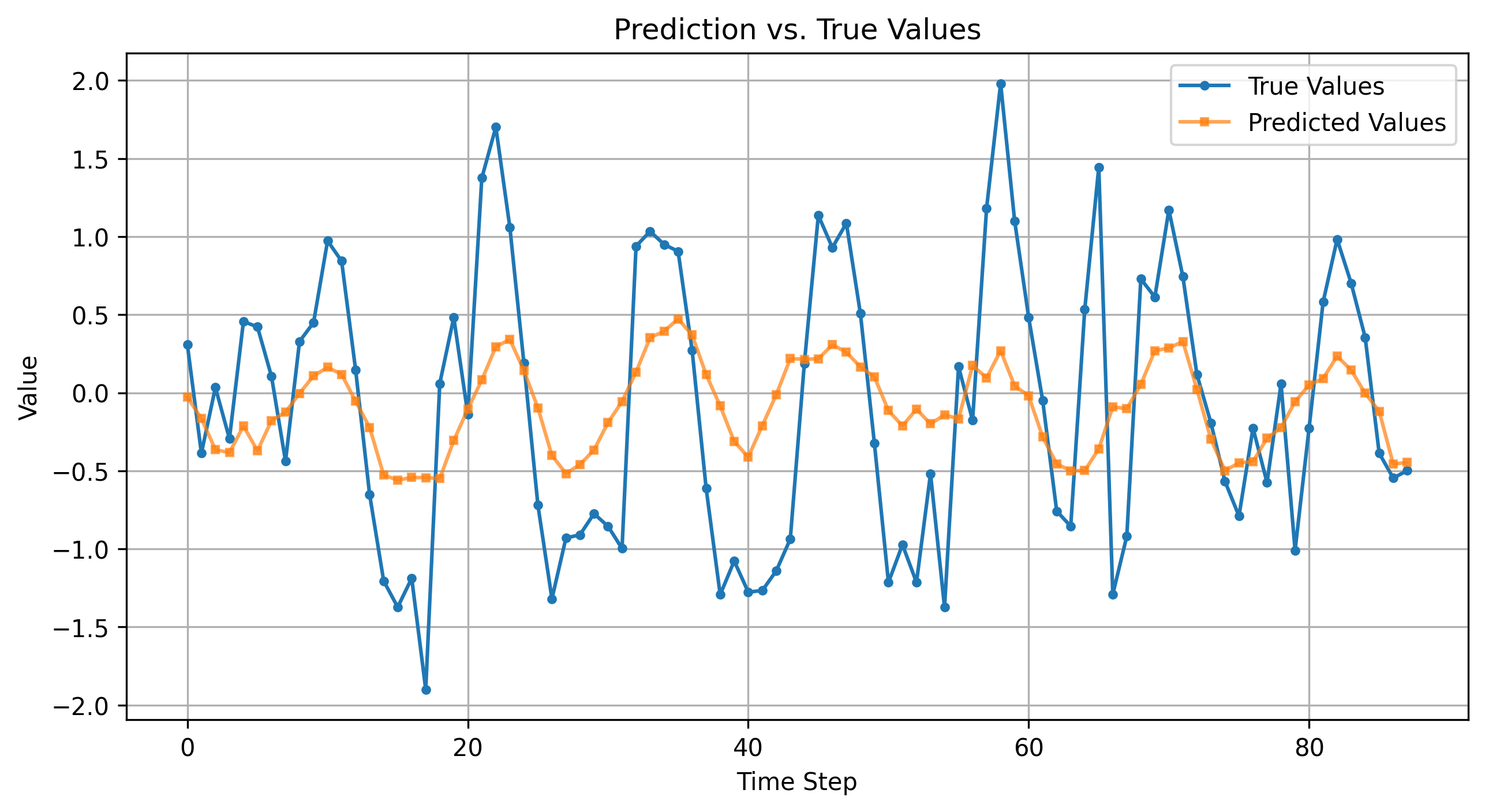}
    \caption{PatchTST}
  \end{subfigure}\hfill
  \begin{subfigure}[b]{0.24\textwidth}
    \includegraphics[width=\linewidth]{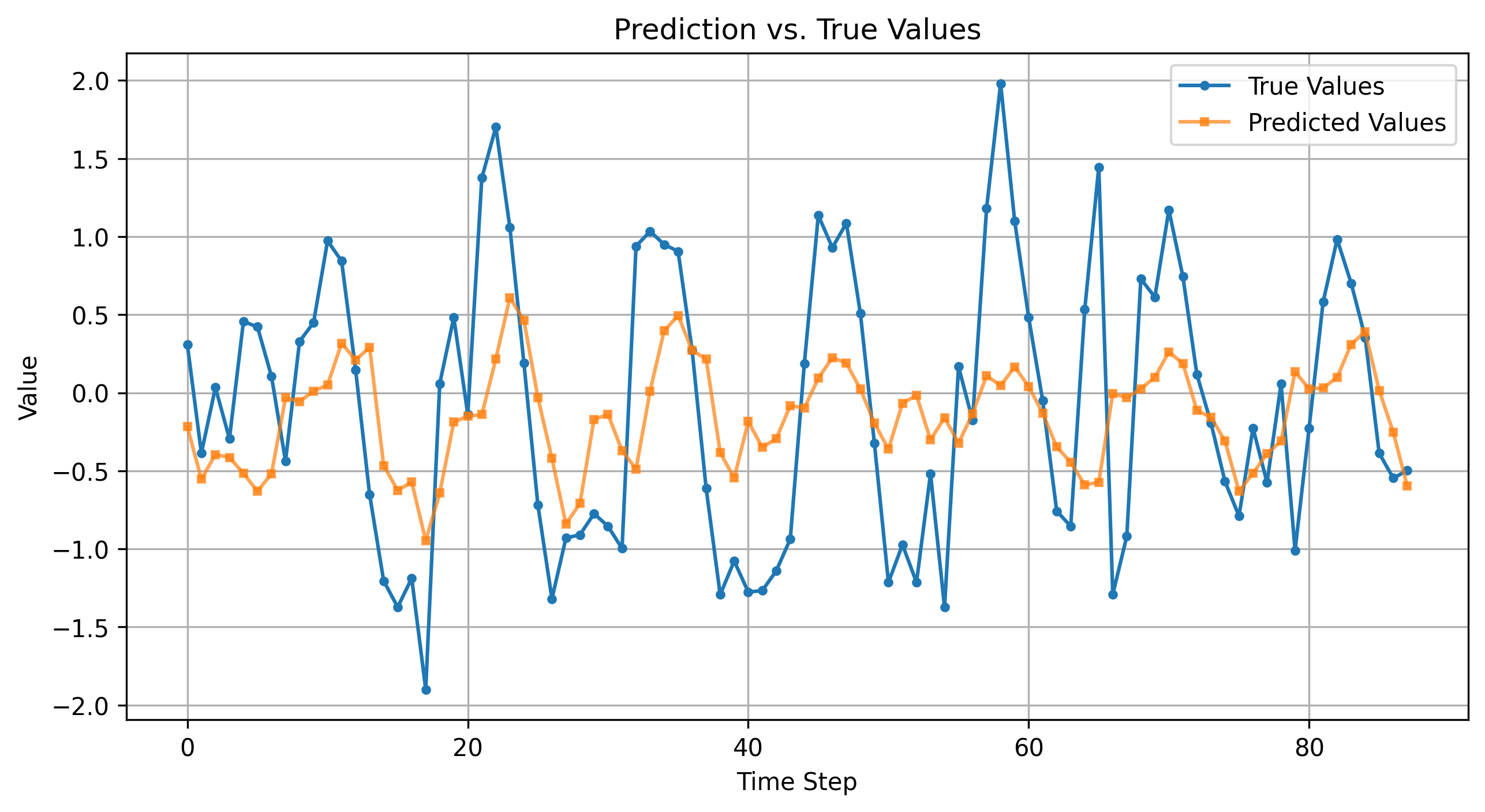}
    \caption{TimeMixer}
  \end{subfigure}\hfill
  \begin{subfigure}[b]{0.24\textwidth}
    \includegraphics[width=\linewidth]{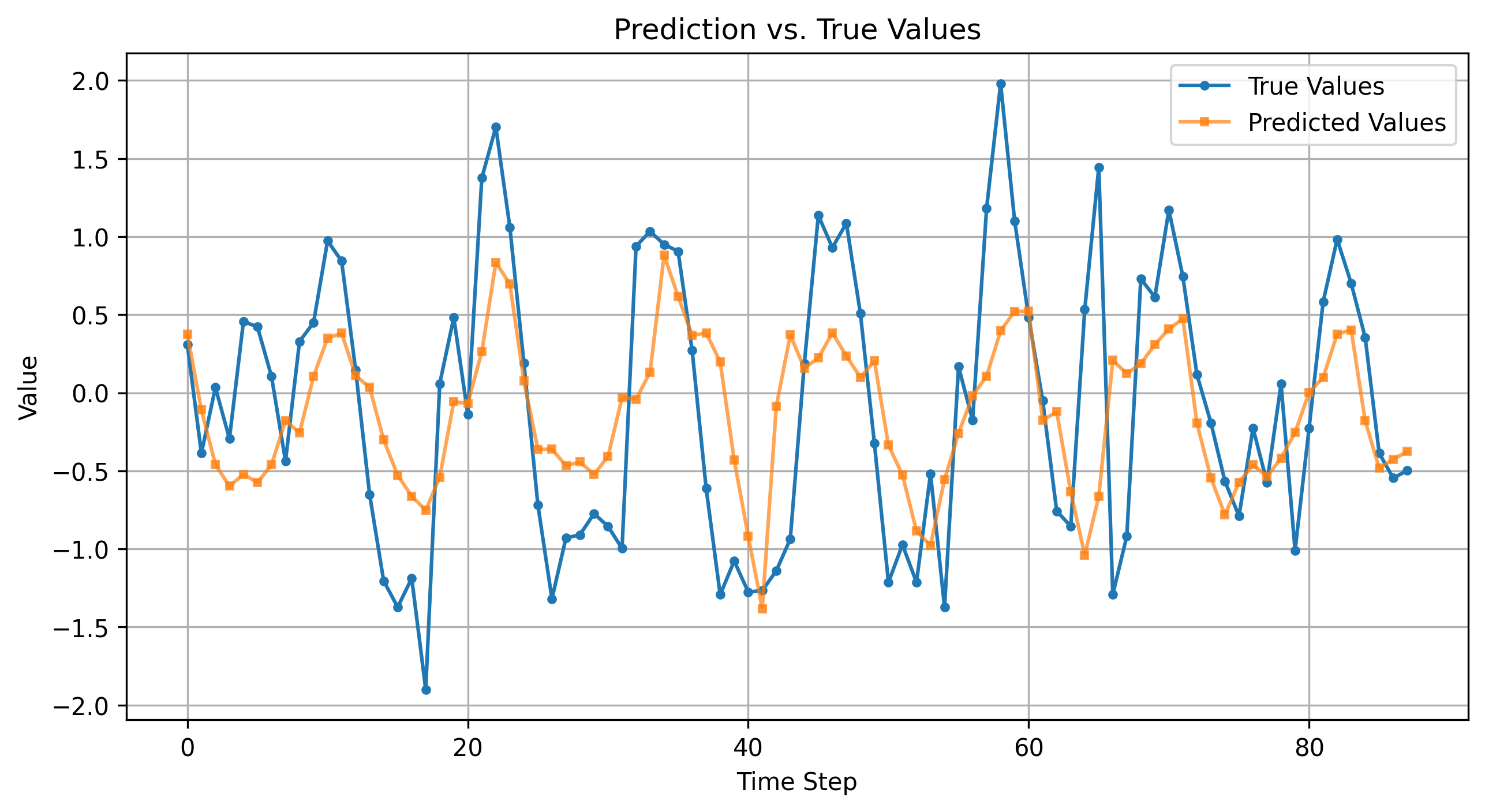}
    \caption{iTransformer}
  \end{subfigure}

  \vspace{4pt}
  \begin{subfigure}[b]{0.24\textwidth}
    \includegraphics[width=\linewidth]{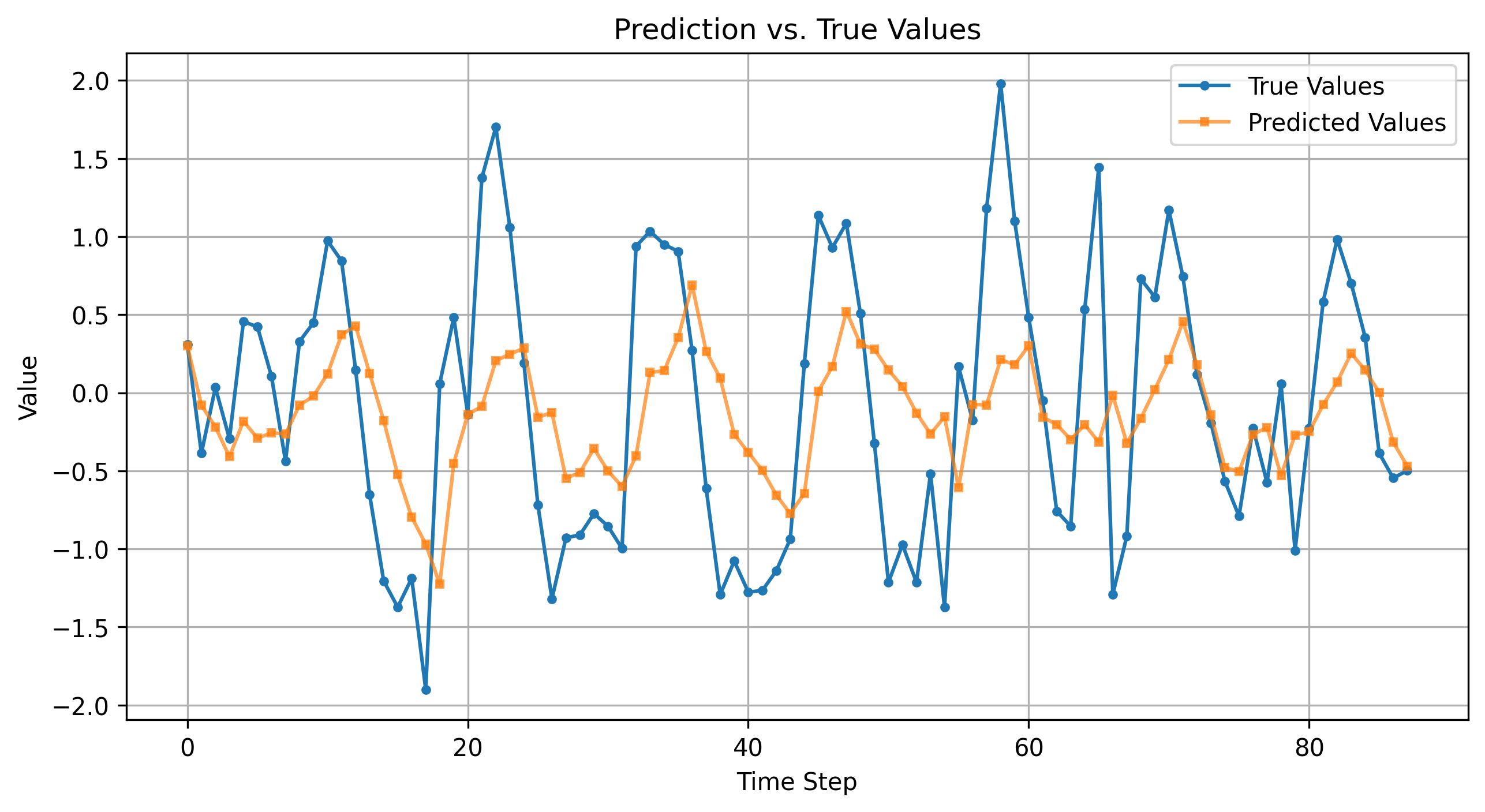}
    \caption{Pyraformer}
  \end{subfigure}\hfill
  \begin{subfigure}[b]{0.24\textwidth}
    \includegraphics[width=\linewidth]{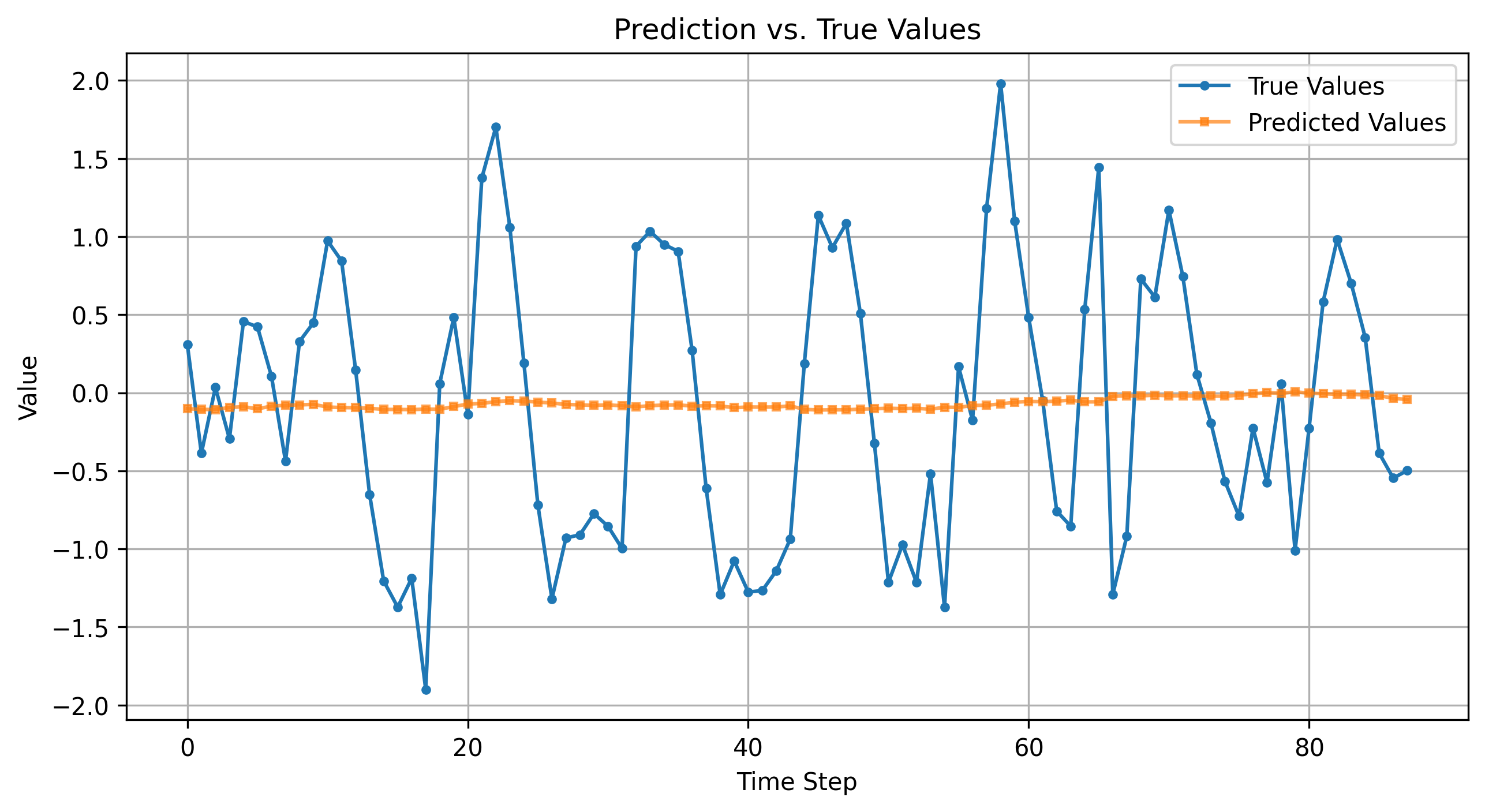}
    \caption{FiLM}
  \end{subfigure}\hfill
  \begin{subfigure}[b]{0.24\textwidth}
    \includegraphics[width=\linewidth]{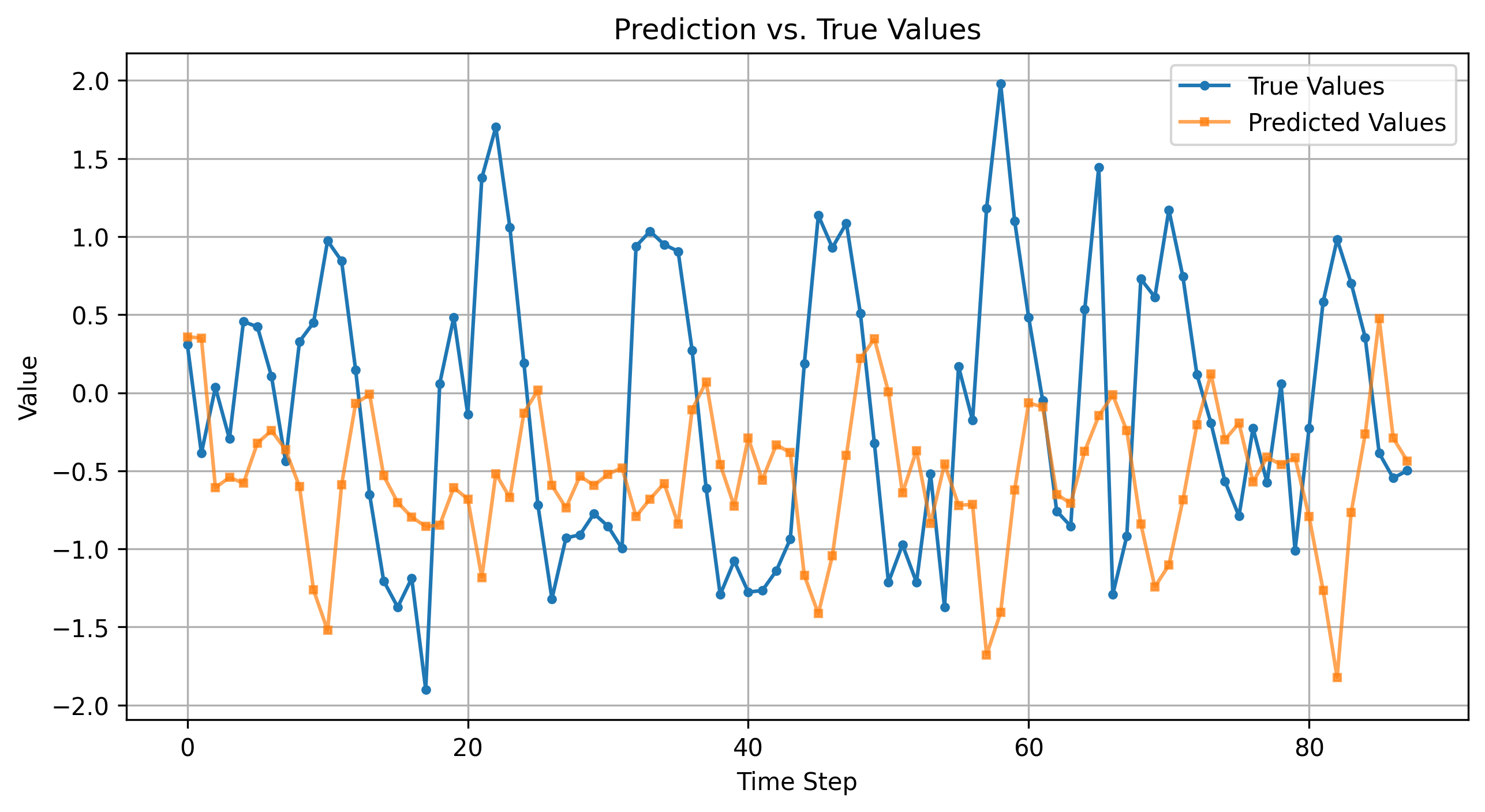}
    \caption{MICN}
  \end{subfigure}\hfill
  \begin{subfigure}[b]{0.24\textwidth}
    \includegraphics[width=\linewidth]{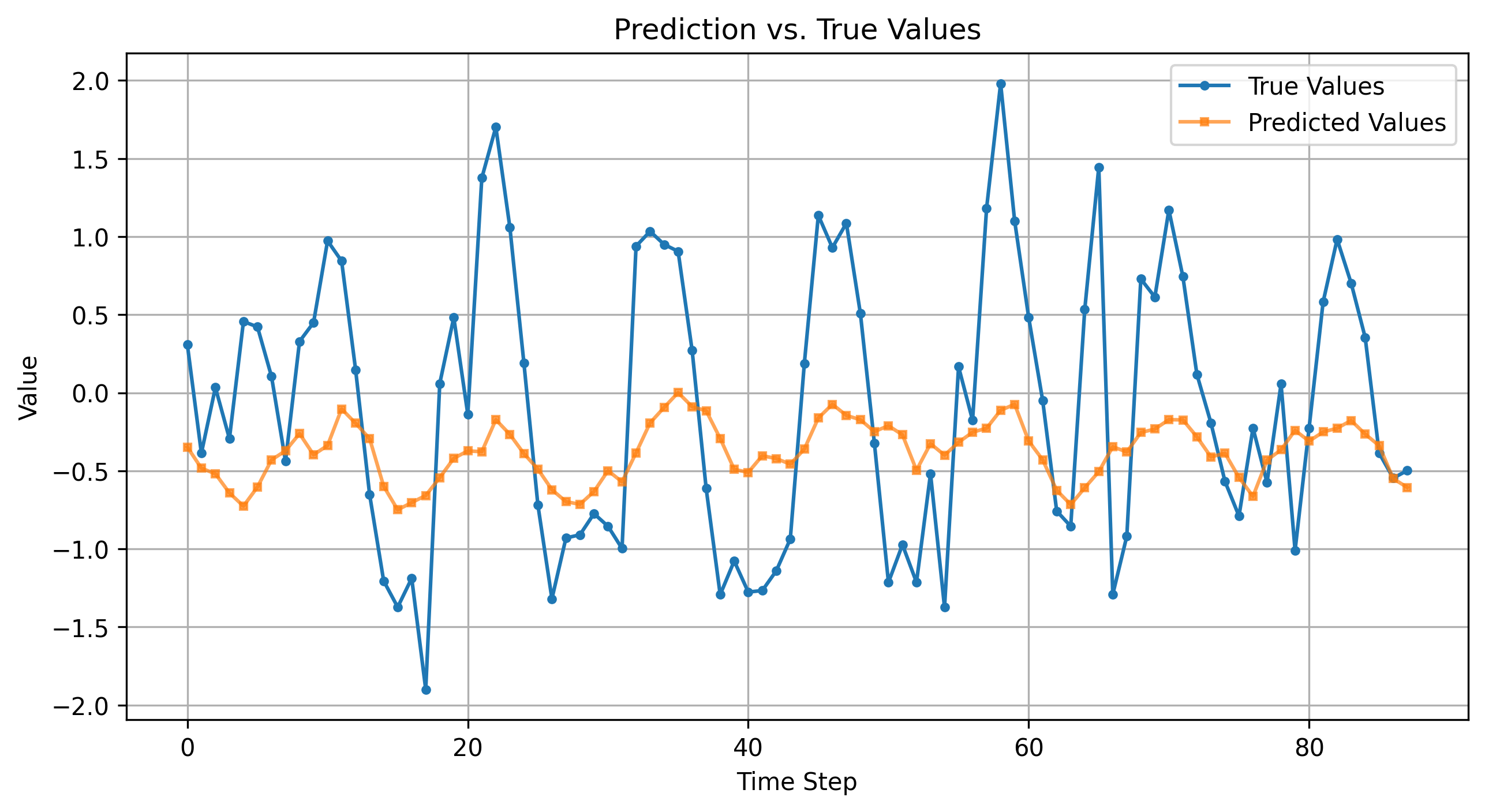}
    \caption{Nonstationary Transformer}
  \end{subfigure}

  \caption{One–month–ahead SPEI at Location $(-26.125,\,129.125)$ (appendix view). 
  Each panel overlays \emph{ground truth} (blue) and \emph{model prediction} (orange) under the same leakage–safe protocol as the main paper 
  (standardization fit on train; identical splits; $H{=}1$). 
  TimesFM (foundation-scale) shows the tightest phase alignment on mild–moderate variability; 
  linear/frequency families (e.g., DLinear, FiLM) reduce variance but may under-shoot extremes; 
  patch/pyramid Transformers (PatchTST, Pyraformer) capture mesoscale fluctuations with occasional tail scatter; 
  nonstationarity-aware variants alleviate regime-shift bias. 
  No re-tuning is performed for this appendix figure.}
  \label{fig:baselines_ts_all}
\end{figure*}

\section{Extended Experiments and Clarifications}
\addcontentsline{toc}{section}{Appendix G\quad Extended Experiments and Clarifications}

This appendix provides extended experiments including:
(i) generalization across an additional variable (temperature) and three out-of-distribution regions;
(ii) projection-operator ablation;
(iii) clarification of methodological scope.
These results complement the main text.

\subsection{More Generalization Across Variables and Regions}

\paragraph{Additional variable: Temperature.}
RGMR also improves temperature forecasting (Table~\ref{tab:temp}), suggesting that its refinement is not SPEI-specific within the tested setting.

\begin{table}[!htbp]
\centering
\scriptsize
\caption{Temperature forecasting results.}
\begin{tabularx}{0.75\linewidth}{l *{3}{>{\centering\arraybackslash}X}}
\toprule
Model & MSE↓ & MAE↓ & R$^2$↑ \\
\midrule
TimesFM v1.0 & 3.121 & 1.534 & 0.933 \\
RGMR    & \textbf{2.653} & \textbf{1.304} & \textbf{0.943} \\
\bottomrule
\end{tabularx}
\label{tab:temp}
\end{table}

\paragraph{Cross-region evaluation.}
We then test the generalization of RGMR by extending to three more regions outside South Australia (Table~\ref{tab:cross_region}).

\begin{table}[!htbp]
\centering
\scriptsize
\caption{Cross-region generalization (TimesFM backbone).}
\label{tab:cross_region}
\begin{tabularx}{0.95\linewidth}{l l *{3}{>{\centering\arraybackslash}X}}
\toprule
Region & Model & MSE↓ & MAE↓ & R$^2$↑ \\
\midrule
US West Coast & TimesFM v1.0 & 0.155 & 0.313 & 0.854 \\
              & RGMR    & \textbf{0.081} & \textbf{0.222} & \textbf{0.924} \\
North Africa  & TimesFM v1.0 & 0.101 & 0.261 & 0.892 \\
              & RGMR    & \textbf{0.065} & \textbf{0.201} & \textbf{0.933} \\
Indonesia     & TimesFM v1.0 & 0.260 & 0.407 & 0.485 \\
              & RGMR    & \textbf{0.204} & \textbf{0.348} & \textbf{0.598} \\
\bottomrule
\end{tabularx}
\end{table}

\paragraph{Longer-horizon forecasting.}
Because drought applications often require multi-month lead times, Table~\ref{tab:horizon} reports results for $H=\{1,3,6\}$ months.
RGMR improves the frozen TimesFM baseline across these tested horizons, indicating that the refinement mechanism extends beyond one-month-ahead forecasting in this setting.

\begin{table}[!htbp]
\centering
\scriptsize
\caption{Forecasting performance at different horizons at Location~1 ($-26.125, 129.125$). Lower is better for MSE/MAE; higher is better for $R^2$.}
\begin{tabularx}{0.9\linewidth}{l l *{3}{>{\centering\arraybackslash}X}}
\toprule
Horizon & Model & MSE↓ & MAE↓ & R$^2$↑ \\
\midrule
1 & TimesFM v1.0       & 0.391 & 0.465 & 0.564 \\
  & TimesFM v1.0 + RGMR & \textbf{0.318} & \textbf{0.423} & \textbf{0.651} \\
3 & TimesFM v1.0         & 0.412 & 0.466 & 0.554 \\
  & TimesFM v1.0 + RGMR & \textbf{0.345} & \textbf{0.433} & \textbf{0.627} \\
6 & TimesFM v1.0         & 0.409 & 0.470 & 0.560 \\
  & TimesFM v1.0 + RGMR & \textbf{0.376} & \textbf{0.465} & \textbf{0.606} \\
\bottomrule
\end{tabularx}
\label{tab:horizon}
\end{table}

\subsection{Projection Operator Ablation}

We evaluated three projection operators used for downscaling to assess the projection choice: block averaging, wavelet decomposition, and STL decomposition.
As shown in Table~\ref{tab:projection}, simple block averaging provides the most stable and accurate refinement, whereas more complex decompositions tend to introduce unnecessary distortions at coarser scales.

\begin{table}[!htbp]
\centering
\caption{Projection operator comparison at Location~1 ($-26.125, 129.125$).}
\label{tab:projection}
\begin{tabularx}{0.75\linewidth}{l *{3}{>{\centering\arraybackslash}X}}
\toprule
Projection & MSE↓ & MAE↓ & R$^2$↑ \\
\midrule
Block Averaging & \textbf{0.318} & \textbf{0.423} & \textbf{0.651} \\
Wavelet         & 0.366 & 0.467 & 0.597 \\
STL             & 0.387 & 0.473 & 0.574 \\
\bottomrule
\end{tabularx}
\end{table}

\subsection{Scope Clarification}

RGMR is an inference-time refinement mechanism for frozen time-series foundation models. It
does not modify model parameters; instead, it operates only on predictions and is compatible with
any TSFM that exposes a standard forward interface, whether univariate or multivariate. This
makes the refinement architecture-agnostic across the evaluated TSFM backbones and removes the
need for retraining or task-specific parameter tuning.

We evaluate on SPEI rather than full climate fields: SPEI is a derived index aggregating multiple variables, and most operational systems model it directly. Full simulators such as SEAS5 \citep{johnson2019seas5} and NMME \citep{kirtman2014nmme} require high-dimensional atmospheric fields not used here and serve different scientific purposes, so a head-to-head comparison would not be meaningful.

RGMR also differs from test-time adaptation \citep{sun2020test}: TTA updates parameters online with auxiliary losses, while RGMR refines outputs without touching parameters. The two are therefore complementary and could in principle be stacked; we leave this combination to future work. Within the tested settings, our experiments show consistent gains across the regions, horizons, variables, backbones, resolution levels, and projection operators covered in this paper, with minimal overhead and behavior consistent with the contraction analysis.


\end{document}